\newcommand{\always}{\square}
\newcommand{\eventually}{\lozenge}
\newcommand{\until}{\mathsf{U}}
\newcommand{\lnext}{\bigcirc}
\newcommand{\true}{\top}
\theoremstyle{plain}
\newtheorem{remark}{Remark}
\newtheorem{definition}{Definition}
\newtheorem{problem}{Problem}
\newtheorem{theorem}{Theorem}
\newcommand{\hl}[1]{#1}
\title{Temporal Logic Motion Planning with Convex Optimization via Graphs of Convex Sets}
\author{Vince Kurtz \IEEEmembership{Student Member, IEEE}
and 
Hai Lin
\IEEEmembership{Senior Member, IEEE}
\thanks{The support of the National Science Foundation (Grant No.  CNS-1830335, IIS-2007949) is gratefully acknowledged.} 
\thanks{Vince Kurtz and Hai Lin are both with the Electrical Engineering Department, University of Notre Dame, Notre Dame, IN, USA.}}
\begin{document}

\maketitle

\begin{abstract}
    Temporal logic is a concise way of specifying complex tasks. But motion planning to achieve temporal logic specifications is difficult, and existing methods struggle to scale to complex specifications and high-dimensional system dynamics. In this paper, we cast Linear Temporal Logic (LTL) motion planning as a shortest path problem in a Graph of Convex Sets (GCS) and solve it with convex optimization. This approach brings together the best of modern optimization-based temporal logic planners and older automata-theoretic methods, addressing the limitations of each: \hl{we avoid clipping and passthrough by representing paths with continuous Bezier curves}; computational complexity is polynomial (not exponential) in the number of sample points; global optimality can be certified \hl{(though it is not guaranteed}); soundness and \hl{probabilistic} completeness are guaranteed under mild assumptions; and most importantly, the method scales to complex specifications and high-dimensional systems, including a 30-DoF humanoid. Open-source code is available at \texttt{\url{https://github.com/vincekurtz/ltl_gcs}}.
\end{abstract}

\section{Introduction}

Robotic and cyber-physical systems often need to do more than avoid obstacles or regulate around a set point. To this end, temporal logics like Linear Temporal Logic (LTL), Metric Temporal Logic (MTL), and Signal Temporal Logic (STL) have grown in popularity as a compact means of expressing complex control objectives. Beyond being both expressive and concise, they are relatively easy for humans to understand, with a combination of familiar boolean operators (``and'', ``or'', ``not'') and temporal operators with intuitive names (``next'', ``always'', ``eventually''). The usefulness of temporal logic becomes most obvious in scenarios like that shown in Fig.~\ref{fig:large_door_puzzle}, where a mobile robot may not pass through a door (red) \textit{until} it has picked up a corresponding key (green) and must \textit{eventually} reach a goal (blue).

Temporal logic motion planning is a fundamentally difficult (NP-hard) problem, and is especially challenging for high degree-of-freedom (DoF) systems. The current state-of-the art is to use Mixed-Integer Convex Programming (MICP) \cite{belta2019formal}. The MICP approach is sound and complete (it will always find the optimal solution if a solution exists) but it has some significant drawbacks. In particular, standard MICP encodings introduce binary variables for each subformula at each time step. Since the worst-case complexity of MICP is exponential in the number of binary variables, MICP scales poorly with both specification complexity and the number of sample points used to represent a path \cite{belta2019formal, kurtz2021more}. 

\begin{figure}
    \begin{subfigure}{\linewidth}
        \centering
        \includegraphics[width=0.6\linewidth]{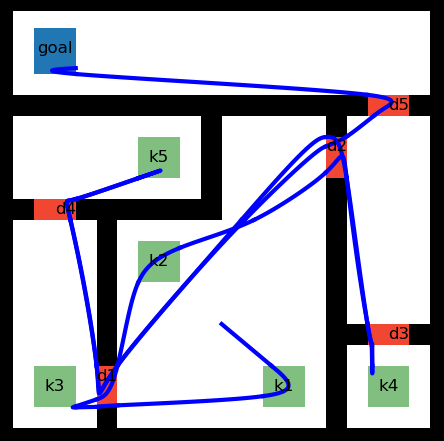}
        \caption{A benchmark from \cite{vega2018admissible}, where a mobile robot must pick up keys ($k1, k2, \dots$) before passing through doors ($d1, d2, \dots$) and reach a goal. The fastest reported solve time is \textbf{49.5 seconds} for a piecewise-linear solution \cite{sun2022multi}. Our proposed approach finds a $\mathcal{C}^2$ globally optimal solution in \textbf{5.8 seconds}.}
        \label{fig:large_door_puzzle}
    \end{subfigure}
    \vspace{0.1em}

    \begin{subfigure}{\linewidth}
        \centering
        \includegraphics[width=0.7\linewidth]{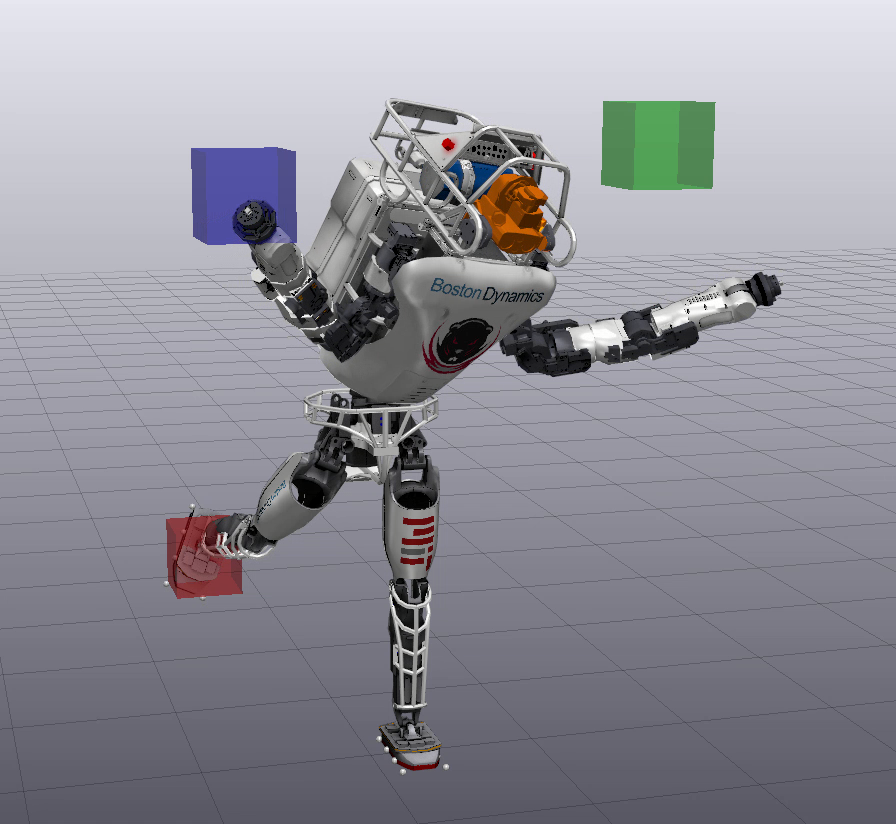}
        \caption{A 30-DoF humanoid is tasked with touching the green target with its left hand, then the red target with its right foot, then the blue target with its right hand. Convex optimization finds a smooth configuration-space solution in under 10 seconds.}
        \label{fig:atlas_cover}
    \end{subfigure}
    \caption{Our proposed approach scales to both complex task specifications (\subref{fig:large_door_puzzle}) and high-dimensional systems (\subref{fig:atlas_cover}).}
\end{figure}

The discretization of time used in standard MICP encodings compounds this problem. A coarse discretization gives rise to ``clipping'', where the path may intersect with obstacles between timesteps, violating the specification. An example of this is shown in Fig.~\ref{fig:key_door:standard}. Clipping can be mitigated with a finer discretization, but this increases computational cost exponentially \cite{belta2019formal}. 

\begin{figure}
    \begin{subfigure}{\linewidth}
        \centering
        \includegraphics[width=0.8\linewidth]{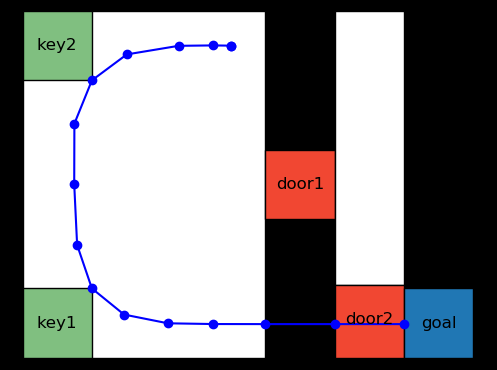}
        \caption{Standard MICP}
        \label{fig:key_door:standard}
    \end{subfigure}
    \begin{subfigure}{\linewidth}
        \centering
        \includegraphics[width=0.8\linewidth]{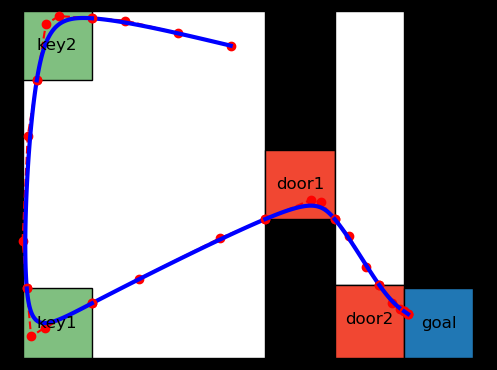}
        \caption{Our approach}
        \label{fig:key_door:ours}
    \end{subfigure}
    \caption{A mobile robot must pick up two keys before it can pass through corresponding doors to reach a goal region, as encoded in formula (\ref{eq:key_door}). A standard MICP (\ref{fig:key_door:standard}) uses a fixed discretizaiton of time, leading to a path that passes through an obstacle. Our proposed approach (\ref{fig:key_door:ours}) considers all values along a smooth Bezier spline, avoiding clipping and pass-through.}
    \label{fig:key_door}
\end{figure}

While exponential cost with the specification complexity is inevitable given the NP-hardness of the problem, it is less clear whether exponential cost with the number of sample points is also inevitable. Intuitively, planning to achieve the same specification with a finer discretization does not seem to make the problem fundamentally harder. Indeed, we show that this exponential complexity can be avoided: our proposed approach scales polynomially with the number of sample points.

Our key idea is to re-frame LTL motion planning as a shortest path problem in a Graph of Convex Sets (GCS) \cite{marcucci2021shortest}. This results in an MICP with a very tight convex relaxation---so tight, in fact, that it can often be solved to global optimality with convex optimization and rounding \cite{marcucci2022motion}. Even if the approximate solution to this MICP is sub-optimal (bounds on sub-optimality are available \cite{marcucci2022motion} and convex optimization often finds the globally optimal solution in practice), any integer-feasible solution is guaranteed to satisfy the specification.

Our proposed approach scales well to complex specifications and high-dimensional configuration spaces, outperforming the state-of-the-art on numerous benchmark problems (see Section~\ref{sec:examples}). Furthermore, we parameterize motion plans with Bezier splines, allowing us to design smooth paths without clipping or pass-through.

Our primary contributions are summarized as follows:
\begin{itemize}
    \item We show that LTL motion planning can be formulated as a shortest path problem in a GCS and solved efficiently using convex programming.
    \item \hl{By representing paths with smooth Bezier splines, we avoid the clipping and pass-through problems associated with most discrete-time temporal logic formulations.}
    \item Our proposed approach scales polynomially with the number of control points used to represent a smooth path, in contrast with the exponential complexity of standard methods. 
    \item We provide proofs of soundness and \hl{probabilistic} completeness.
    \item We demonstrate the scalability of our proposed approach to complex specifications and high-dimensional configuration spaces with several simulation examples, and provide open-source code to reproduce these results \cite{software}.
\end{itemize}

The remainder of this paper is organized as follows: Section~\ref{sec:related_work} reviews related work on temporal logic motion planning. A formal problem statement and relevant definitions are given in Section~\ref{sec:problem_formulation}. Background information on Bezier curves and graphs of convex sets is given in Section~\ref{sec:background}. Our main result---a convex programming solution to temporal logic motion planning---is described in Section~\ref{sec:ltl_as_gcs}. Section~\ref{sec:theory} provides proofs of the soundness, completeness, and computational complexity of our proposed approach. We provide examples in Section~\ref{sec:examples}, discuss limitations of our proposed method in Section~\ref{sec:limitations}, and conclude with Section~\ref{sec:conclusion}.

\section{Related Work}\label{sec:related_work}

Early work on temporal-logic-based control was dominated by automata-theoretic methods \cite{vega2018admissible,belta2017formal}. These methods generally assume that the system can be modeled as a finite-state transition system. Once a given LTL formula is transformed into an equivalent automaton, a simple graph search in the product of the transition system and the automaton reveals a satisfying path \cite[Chapter~5]{belta2017formal}. For finite-state transition systems, such automata-based methods are sound, complete, and computationally efficient. Similar techniques have achieved widespread adoption in formal verification and model checking applications \cite{baier2008principles}. 

Applying automata-based methods to motion planning is non-trivial, as we must plan in (non-finite) configuration space. The dominant idea in the literature is to use some sort of finite transition system abstraction \cite{vega2018admissible, da2019active, da2021automatic}, typically related to labeled \hl{regions in} configuration space. Once a satisfying abstract path is found, a lower-level motion planner searches for a path through the corresponding \hl{regions}. While this can be computationally efficient, as it separates logical constraints from physical dynamics, it is difficult to account for the gap between the geometry of the scenario and the abstract transition system. In practice, this means that an abstract path may be dynamically infeasible or inefficient once it is translated to a motion plan: shortest paths in the abstraction do not necessarily correspond to shortest paths in configuration space, as illustrated in Fig.~\ref{fig:shortest_path}.

\begin{figure}
    \centering
    \includegraphics[width=0.9\linewidth]{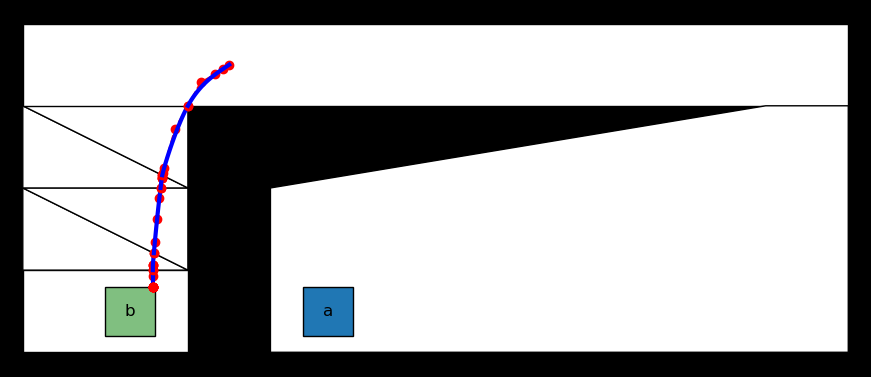}
    \caption{A mobile robot is tasked with eventually reaching region $a$ (blue) or region $b$ (green), as denoted by formula (\ref{eq:shortest_path}). Standard abstraction-based methods would likely take the longer path to $a$, since this requires passing through fewer \hl{regions}. Our proposed approach finds the shorter path to $b$ directly. }
    \label{fig:shortest_path}
\end{figure}

In prior work, we sought to address this gap with a Counter Example Guided Inductive Synthesis (CEGIS) \cite{clarke2000counterexample} inspired cycle of planning and re-planning for both a high-level automata-based planner and a low-level motion planner \cite{da2019active, da2021automatic}. While this allows for completeness guarantees, even local optimality is difficult to achieve and many re-planning cycles may be required. 

Another drawback for many automata-based methods stems from a reliance on sampling-based motion planning \cite{da2019active, vasile2013sampling,luo2021abstraction, kantaros2022perception}. While such methods can have good scalability properties, they often struggle on problems with narrow passages \cite{zhang2008efficient} and it is difficult to enforce dynamic feasibility. It is also often difficult to tell whether a specification is infeasible, as sampling-based methods do not provide certificates of infeasibility. 

\hl{
A particularly interesting set of automata-based planners do not search for specific satisfying paths, but rather generate feedback controllers that guarantee closed-loop satisfaction of the specification \cite{kloetzer2008fully,kress2009temporal}. While these methods provide an elegant way to bridge the gap between continuous dynamics and the discrete specification, their scalability is limited by the fact that they search for solutions across the whole state space. To the best of our knowledge, such techniques have not been able to scale beyond 4-dimensional state spaces. 
}

Despite these limitations, we are strongly inspired by automata-theoretic methods in this paper. Specifically, we use the GCS framework to perform graph search in the product of a transition system and an automaton. In contrast to existing abstraction-based methods, this graph search accounts for the geometry of the scenario directly, avoiding the need for separate low-level and high-level planners. Unlike sampling-based methods, our GCS method can navigate narrow passages with ease and enforce dynamical feasibility for differentially flat systems \cite{murray1995differential}. \hl{Unlike methods that search for feedback controllers, our approach scales to high-dimensional systems, including a 30-DoF humanoid.}

Optimization-based methods using mixed-integer programming were developed to address the limitations of abstraction-based methods \cite{belta2019formal}. MICP methods typically do not require any abstraction, and consider the state (or configuration) space directly. Continuous variables represent the state at discrete time steps along the path. Binary variables and additional constraints are added to enforce the specification \cite{raman2014model, belta2019formal}. The MICP approach is sound, complete, and (unlike automata-based methods) allows for guarantees of global optimality. MICP is particularly popular for STL \cite{raman2014model,kurtz2021more}, but has also been applied to LTL \cite{wolff2014optimization}, MTL \cite{kurtz2021more}, and a variety of other temporal logics. MICP is widely considered to be the state-of-the-art in temporal logic motion planning \cite{belta2019formal}.

Nonetheless, MICP methods have some significant limitations. New binary variables are introduced for each subformula (or predicate) for each time step. This means that the computational complexity is exponential not only in the size of the formula, but also in the number of time steps. Making matters worse, too few timesteps can lead to clipping and pass-through problems, where the path intersects obstacles between timesteps (Fig.~\ref{fig:key_door:standard}). 

Mitigating these limitations is an area of much active research. \cite{yang2020continuous} uses a control-barrier function between time steps to avoid clipping and pass-through. \cite{sadraddini2018formal} and \cite{kurtz2022mixed} reduce the size of the MICP, but are still left with exponential complexity in the number of time steps. \cite{sun2022multi} formulates an MICP based on piecewise linear paths, accounting for constraint satisfaction between time steps. \cite{sun2022multi} is a significant source of inspiration for us, and our use of Bezier splines to represent smooth paths is a generalization of this idea. In addition to allowing us to place constraints on the whole path, not just the sample points, Bezier splines allow us to enforce dynamic feasibility for differentially flat systems, as well as to optimize for quantities like path length, velocity, and acceleration. 

As an alternative way to make MICP more efficient, we proposed a formulation with more binary variables but a tighter convex relaxation in \cite{kurtz2021more}, resulting in better branch-and-bound solver performance in practice. The results in this paper are a substantial improvement on that basic idea. Our proposed approach results in an MICP with an even tighter convex relaxation: so tight, in fact, that it can be solved with convex optimization and rounding \cite{marcucci2022motion}. Interestingly, it is only through connections with older automata-theoretic methods that we are able to formulate this efficiently-solvable MICP. In this sense, we believe that our proposed approach brings together the best of automata-based and MICP-based methods. 

Finally, we acknowledge the recent trend of attempting to avoid the NP-hardness of temporal logic motion planning altogether by providing approximate solutions via non-convex optimization \cite{pant2017smooth,mehdipour2019arithmetic,gilpin2020smooth,kurtz2020trajectory}, learning \cite{cai2021reinforcement,LeungPavone2022}, or control barrier functions \cite{lindemann2018control, srinivasan2020control}. While such approaches can be extremely efficient and may be practical for some applications, they offer limited or no completeness guarantees and rarely scale to very complex specifications like that shown in Fig.~\ref{fig:large_door_puzzle}.

\section{Problem Formulation}\label{sec:problem_formulation}

\subsection{Linear Temporal Logic}\label{sec:problem_formulation:ltl}

In this section, we introduce the basics of LTL. Further details can be found in \cite{belta2017formal} and \cite{baier2008principles}. 

The syntax, or grammar, of LTL is defined as follows:
\begin{equation}\label{eq:ltl}
    \varphi := \true \mid a \mid \lnot \varphi \mid \varphi_1 \land \varphi_2 \mid \lnext \varphi \mid \varphi_1 \until \varphi_2,
\end{equation}
where $\true$ denotes ``true'', $a$ is an atomic proposition from the set $AP$, $\lnot$ (``not'') is the negation operator, $\land$ (``and'') is the conjunction operator, $\lnext$ is the ``next'' temporal operator, and $\until$ is the ``until'' temporal operator. These operators can be combined to form new operators like ``or'' ($\varphi_1 \lor \varphi_2 = \lnot (\lnot \varphi_1 \land \lnot \varphi_2)$), ``eventually'' ($\eventually \varphi = \true \until \varphi$), and ``always'' ($\always \varphi = \lnot \eventually \lnot \varphi$).

The semantics, or meaning, of LTL is defined over sequences of atomic propositions called \textit{words}:
\begin{definition}
    A word $\sigma = A_0, A_1, A_2, \dots$ is a sequence of atomic propositions, where $A_i \in 2^{AP}$ is a set of atomic propositions. We denote the suffix beginning at index $j$ as $\sigma[j\dots] = A_j,A_{j+1},\dots$.
\end{definition}
With this in mind, LTL semantics are defined recursively as follows, where we denote the fact that a word $\sigma$ satisfies an LTL formula $\varphi$ with $\sigma \vDash \varphi$:
\hl{
\begin{itemize}
    \item $\sigma[j\dots] \vDash \true$.
    \item $\sigma[j\dots] \vDash a$ if and only if $a \in A_j$.
    \item $\sigma[j\dots] \vDash \varphi_1 \land \varphi_2$ if and only if $\sigma[j\dots] \vDash \varphi_1$ and $\sigma[j\dots] \vDash \varphi_2$.
    \item $\sigma[j\dots] \vDash \lnot \varphi$ if and only if $\sigma[j\dots] \nvDash \varphi$.
    \item $\sigma[j\dots] \vDash \lnext \varphi$ if and only if $\sigma[j+1\dots] \vDash \varphi$.
    \item $\sigma[j\dots] \vDash \varphi_1 \until \varphi_2$ if and only if $\exists k \geq j$ such that $\sigma[k\dots] \vDash \varphi_2$ and $\sigma[i\dots] \vDash \varphi_1$ for all $j \leq i < k$.
\end{itemize}}

Full LTL, as defined above, is evaluated on infinite length words $\sigma \in (2^{AP})^\omega$. An important subset, or fragment, of LTL is the set of \textit{syntactically co-safe} formulas:
\begin{equation}\label{eq:scltl}
    \varphi := \true \mid a \mid \lnot a \mid \varphi_1 \land \varphi_2 \mid \varphi_1 \lor \varphi_2 \mid \lnext \varphi \mid \varphi_1 \until \varphi_2.
\end{equation}
Satisfaction of such a co-safe formula can be uniquely determined by a finite-length word. This fragment is of particular interest for motion planning problems, since most motion plans are of finite length. Infinite words can be relevant to motion planning, however, if plans contain loops: for example, $\varphi = \always \eventually a$ might specify that a robot must visit a recharging station labeled ``$a$'' infinitely often. In this paper, we present motion planning algorithms for both the co-safe fragment and full LTL, though we are able to provide the strongest completeness and optimality guarantees only for co-safe formulas. 

The final aspect of LTL that we will highlight here is its relationship with automata. In short, any LTL formula can be transformed into a Deterministic Buchi Automaton (DBA), and any co-safe LTL formula can be transformed into a Deterministic Finite Automaton (DFA). These are defined as follows:
\begin{definition}\label{def:automaton}
    An deterministic automaton is a tuple $\mathcal{A} = (Q, q_0, \Sigma, \delta, F)$, where
    \begin{itemize}
        \item $Q$ is a set of states,
        \item $q_0 \in Q$ is an initial state,
        \item $\Sigma$ is an input alphabet,
        \item $\delta : Q \times \Sigma \to Q$ are transition relations
        \item $F \subseteq Q$ is a set of accepting states
    \end{itemize}
    If $\mathcal{A}$ is a DFA, any accepting run must end at a state in $F$. If $\mathcal{A}$ is a DBA, any accepting run must visit $F$ infinitely often.
\end{definition}

An example of an LTL formula and a corresponding automaton is shown in Figure~\ref{fig:dfa}. Only those words that satisfy the LTL formula are accepted as inputs to the automaton. The initial state is (1), while the accepting state (2) is marked with a double circle.

\begin{figure}
    \centering
    \includegraphics[width=0.6\linewidth]{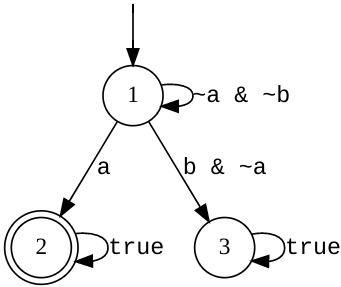}
    \caption{An automaton corresponding to the LTL formula $\varphi = \lnot b \until a$, generated by the LTLf2DFA tool \cite{ltlf2dfa}.}
    \label{fig:dfa}
\end{figure}

Details of the conversion process between LTL and DFA/DBA can be found in \cite{baier2008principles,belta2017formal}. While the worst-case complexity for this conversion is double-exponential in the size of the LTL formula, mature model-checking software is available which can perform the conversion fairly rapidly for modestly sized formulas \cite{belta2017formal,ltlf2dfa,monamanual2001}. 

\subsection{LTL Motion Planning}\label{sec:problem_formulation:ltl_motion_planning}

To apply LTL formulas to motion planning, we assume that a \hl{set of labeled convex sets in} the robot's configuration space is given. More formally, we use $\mathcal{X}$ to denote a set of convex \hl{regions} of the robot's configuration space, $\mathcal{X}_i \subseteq \mathbb{R}^n$. We also define $L : \mathcal{X} \to 2^{AP}$ as the mapping from \hl{these regions} to a set of atomic propositions. The label $L(\mathcal{X}_i)$ indicates which atomic propositions hold in \hl{region} $\mathcal{X}_i$. \hl{This label applies to every configuration in $\mathcal{X}_i$.} It it possible that no atomic propositions hold at a given \hl{region}, i.e., $L(\mathcal{X}_i) = \emptyset$, and \hl{the regions} may or may not overlap. \hl{Additionally, the set of labeled regions need not cover the entire configuration space.}

Informally, our goal is to find a path in configuration space that satisfies a given specification. More formally, we define LTL satisfaction for a path as follows:
\begin{definition}
    A path $p : \mathbb{R}^+ \to \mathbb{R}^n$ is a mapping from time $t \in \mathbb{R}^+$ to configuration\footnote{Systems where the configuration space is not $\mathbb{R}^n$ could be considered if we represent paths with generalized Bezier splines \cite{kim1995general,luo2016generalized}. It is often the case for differentially flat systems, however, that even if the configuration space is non-Euclidean, flat outputs are in $\mathbb{R}^n$ \cite{murray1995differential}.} $q \in \mathbb{R}^n$, such that $q_t = p(t)$ is the system configuration at time $t$.
\end{definition}

\begin{definition}
   The trace of path $p$ is the sequence of labels $L(\mathcal{X}_i)$ associated with the \hl{regions} $\mathcal{X}_i$ visited by $p$, i.e., $\mathrm{trace}(p) = L(\mathcal{X}_0),L(\mathcal{X}_1),\dots$.
\end{definition}
In this way, we can say that a path $p$ satisfies a specification $\varphi$ if $\mathrm{trace}(p) \vDash \varphi$.

\subsection{Problem}\label{sec:problem_formulation:problem}

With this notion of path satisfaction in mind, we present a formal problem statement as follows:

\begin{problem}\label{prob:main}
    Given an initial configuration $q_0$, \hl{convex regions} $\mathcal{X}$, labels $L$, and an LTL specification $\varphi$, find a minimum-cost path that satisfies the specification, i.e.,
    \begin{subequations}\label{eq:main_problem}
    \begin{align}
        \min_{p} ~& J(p) \\
        \mathrm{s.t.} ~& \mathrm{trace}(p) \vDash \varphi, \\
                       & p \in \mathcal{C}^d, \quad d \geq 0, \\
                       & p(0) = q_0.
    \end{align}
    \end{subequations}
\end{problem}

The cost $J$ can be any convex function of the path $p$ or its time derivatives. For example, it might be desirable to minimize path length or snap (second derivative of acceleration) \cite{mellinger2011minimum}. A variety of costs which are are convex in our parameterization of $p$ are discussed in Section~\ref{sec:ltl_as_gcs:cosafe}. The constraint $p \in \mathcal{C}^d$ ensures $d$-times continuous differentiability, allowing us to enforce dynamical feasibility for any differentially flat system \cite{murray1995differential}. 

\section{Background}\label{sec:background}

\subsection{Bezier Curves}\label{sec:background:bezier_curves}

To solve the optimization problem (\ref{eq:main_problem}) numerically, we need to somehow represent the continuous path $p$ with a finite set of points. The standard approach in the temporal logic literature is to discretize the path, considering its value only at regularly spaced sample points \cite{belta2019formal}. But, as discussed in Section~\ref{sec:related_work}, this sort of discretization leads to clipping and pass-through problems.

In this paper, we parameterize the path $p$ with a sequence of Bezier curves (i.e., a Bezier spline). A Bezier curve $b : [0,1] \to \mathbb{R}^n$ is defined by 
\begin{equation}
    b(s) = \sum_{i=0}^{k}\begin{pmatrix}k \\ i\end{pmatrix}(1-s)^{k-i}s^i \gamma_i,
\end{equation}
where $k$ is the order and $\gamma_i \in \mathbb{R}^n$ are the $k+1$ control points. Note that piecewise linear paths, such as those considered in \cite{sun2022multi}, are a special case where $k=1$.

Bezier curves have numerous appealing properties, and indeed they have been widely used in the robotic motion planning literature \cite{lau2009kinodynamic}, though they are less common in the context of temporal logic:
\begin{itemize}
    \item A Bezier curve of order $k$ is differentiable $k-1$ times.
    \item The time derivative of a Bezier curve is another Bezier curve.
    \item Bezier curves can be lined up to form a path (spline) by placing constraints on their control points, i.e., the last control point of the previous segment must be equal to the first control point of the next segment. Similar constraints can be applied to enforce a desired degree of differentiability.
    \item A Bezier curve is contained in the convex hull of its control points. 
\end{itemize}

Figures \ref{fig:key_door}, \ref{fig:shortest_path}, and \ref{fig:multitarget} show such Bezier splines. Control points are illustrated with red dots connected by dotted lines, while the path is shown in solid blue.

\subsection{Graphs of Convex Sets}\label{sec:background:gcs}

The computational workhorse behind our proposed approach is Graphs of Convex Sets (GCS), a framework first introduced in \cite{marcucci2021shortest} and applied to standard motion planning problems (reach a goal and avoid obstacles) in \cite{marcucci2022motion}. In this work, we are heavily inspired by \cite{marcucci2022motion}, and show that the promising computational attributes of GCS can be applied to motion planning problems much more complex than the classical reach-avoid problem. \hl{In particular, we show that LTL motion planning can be cast as a shortest path problem in a GCS, which can be solved efficiently using tools developed in \cite{marcucci2021shortest, marcucci2022motion}.}

In short, the GCS framework aims to find the shortest path in a graph, where each vertex of the graph is associated with a convex set, and path lengths depend on which point in the set is chosen. A simple illustration can be found in \cite[Fig.~1]{marcucci2021shortest}. More formally, a GCS is defined as follows:
\begin{definition}
    A Graph of Convex Sets $\mathcal{G} = (\mathcal{V}, \mathcal{E})$ is a directed graph where
    \begin{itemize}
        \item $\mathcal{V}$ is a set of vertices,
        \item Each vertex $v \in \mathcal{V}$ is associated with a convex set $\mathcal{X}_v$ and a point $x_v \in \mathcal{X}_v$,
        \item $\mathcal{E} \subset \mathcal{V} \times \mathcal{V}$ is a set of edges,
        \item Each edge $e = (u,v) \in \mathcal{E}$ is associated with a convex non-negative length function $l_e(x_u,x_v)$ and (optionally) a convex constraint $(x_u, x_v) \in \mathcal{X}_e$.
    \end{itemize}
\end{definition}

Given a GCS, the goal is to find a minimium-cost path from a source vertex $v_0 \in \mathcal{V}$ to a target vertex $v_T \in \mathcal{V}$. Defining a path $\xi$ as a sequence of vertices, denoting the set of all paths that connect $v_0$ and $v_T$ as $\Xi$, and denoting as $\mathcal{E}_\xi$ the set of edges traversed by path $\xi$, we can state this problem formally as follows:
\begin{subequations}\label{eq:gcs}
\begin{align}
    \min ~& \sum_{e=(u,v)\in\mathcal{E}_\xi} l_e(x_u, x_v) \\
    \mathrm{s.t.~}& \xi \in \Xi \\
                  & x_v \in \mathcal{X}_v \quad \forall v \in p \\
                  & (x_u, x_v) \in \mathcal{X}_e \quad \forall e = (u,v) \in \mathcal{E}_\xi.
\end{align}
\end{subequations}
This problem turns out to be NP-hard in general, and but an efficient MICP encoding (using perspective functions and harnessing graph-theoretic connections) was proposed in \cite{marcucci2021shortest}. This MICP has a very tight convex relaxation, meaning it is fairly close to convex optimization and tends to be efficiently solved by branch-and-bound solvers. The convex relaxation is so tight, in fact, that good solutions can often be found with a combination of convex optimization and rounding \cite{marcucci2022motion}. When convex optimization is used in this way, it is possible to bound the optimality gap without solving a full MICP \cite[Section~4.2]{marcucci2022motion}, and convex optimization is often able to find a globally optimal solution---such is the case with the example in Fig.~\ref{fig:large_door_puzzle}.

While there are many important details in the transcription of problem (\ref{eq:gcs}) into a convex program, we refer the interested reader to \cite{marcucci2021shortest, marcucci2022motion} for these details. In this paper, we use the GCS tools provided in Drake \cite{drake}. With that in mind, our primary contribution is to show that Problem~\ref{prob:main} can be rewritten as a GCS problem of the form (\ref{eq:gcs}): then existing tools can be used to solve (\ref{eq:gcs}) rapidly with convex programming. 

\section{LTL Motion Planning as a Graph of Convex Sets}\label{sec:ltl_as_gcs}

In this section, we present our main results on transforming an LTL motion planning problem into a GCS problem. Once we have a GCS problem of the form (\ref{eq:gcs}), we can solve (\ref{eq:gcs}) exactly using MICP or approximately using convex optimization and rounding. 

In Section~\ref{sec:ltl_as_gcs:cosafe} we consider co-safe LTL formulas, a fragment which includes most specifications relevant to motion planning. In Section~\ref{sec:ltl_as_gcs:full_ltl} we extend these result to full LTL formulas, considering infinite-length paths via loops. 

\subsection{Co-Safe Formulas}\label{sec:ltl_as_gcs:cosafe}

The basic idea, outlined in Algorithm~\ref{alg:cosafe}, is as follows: first we construct a finite transition-system abstraction and convert the given co-safe LTL formula into a DFA. We then construct a graph of convex sets as the product of the transition system and the DFA. The shortest path in this product GCS corresponds to a minimum-cost path that satisfies the LTL specification. 

\begin{algorithm}
    \caption{Co-Safe LTL Motion Planning}\label{alg:cosafe}
    \begin{algorithmic}
        \Require spec $\varphi$, \hl{regions} $\mathcal{X}$, labels $L$, initial state $q_0$

        \State $TS = TransitionSystem(\mathcal{X}, L)$ \Comment{Def.~\ref{def:transition_system}}
        \State $\mathcal{A} = DFA(\varphi)$
        \State $\mathcal{G} = TS \otimes \mathcal{A}$ \Comment{Def.~\ref{def:product_gcs}}
        \State $p^* = $ shortest path in $\mathcal{G}$ \Comment{MICP or convex opt.}
        
        \Return $p^*$
    \end{algorithmic}
\end{algorithm}

This procedure is very similar to, and indeed heavily inspired by, classical model checking approaches that also take a product between a DFA and a transition system, then perform graph search to certify satisfaction \cite{belta2017formal}. They key difference in this case is that the physical layout of the scenario and constraints on the continuous path are included in the product graph, which is a GCS. Existing automata-based LTL motion planning methods return a sequence of \hl{regions} from graph search, and an additional low-level planner is needed to find a continuous path consistent with this sequence of \hl{regions} \cite{vega2018admissible, da2019active, da2021automatic}. Our approach, on the other hand, combines the automata-based handling of logical constraints and optimization-based consideration of a continuous path in one step, via the GCS framework.

First, we construct a transition system from the given labeled \hl{convex regions} as follows:
\begin{definition}\label{def:transition_system}
    The transition system abstraction is a tuple $TS = (S, s_0, \to, \mathcal{L}, \mathcal{P})$, where
    \begin{itemize}
        \item $S$ is a set of states corresponding to each \hl{region},
        \item $\mathcal{P} : S \to \mathcal{X}$ is a mapping from states to convex sets in configuration space,
        \item $s_0$ is an initial state such that $q_0 \in \mathcal{P}(s_0)$,
        \item $\to$ is a transition relation, where $s \to s'$ if and only if $\mathcal{P}(s) \cap \mathcal{P}(s') \neq \emptyset$,
        \item $\mathcal{L} : S \to 2^{AP}$ is a labeling function, $\mathcal{L}(s) = L(\mathcal{P}(s))$.
    \end{itemize}
\end{definition}
Note that we define a transition whenever there is a non-empty intersection between \hl{regions}. Intersecting \hl{regions} may have significant overlap, or may simply be adjacent to one another. 

To transform the LTL formula $\varphi$ into a DFA $\mathcal{A}$, we can use any one of a variety of existing software tools. In the examples of Section~\ref{sec:examples}, we use the LTLf2DFA tool \cite{ltlf2dfa}. Further details on this conversion procedure can be found in \cite{belta2017formal,baier2008principles}. 

The final step is to formulate a GCS as the product of $TS$ and $\mathcal{A}$:
\begin{definition}\label{def:product_gcs}
    The product $\mathcal{G} = TS \otimes \mathcal{A}$ is a GCS as follows:
    \begin{itemize}
        \item $\mathcal{V} = \{S \times Q, v_T\}$ are the vertices\hl{,}
        \item $v_0 = (s_0, q_0)$\hl{,}
        \item $v_T$ is an extra vertex, where edges $(s,q) \to v_T$ exist only if $q \in F$,
        \item other edges in $\mathcal{E}$ are such that $(s,q) \to (s',q')$ exists if and only if $s \to s'$ is a transition in $TS$, and $\delta(q, \mathcal{L}(s)) = q'$ is a transition in $\mathcal{A}$,
        \item $\mathcal{X}_v = \mathcal{P}(s)^{k+1}$, where $k$ is is the desired Bezier curve degree and $\mathcal{Y}^k$ denotes the Cartesian power of convex set $\mathcal{Y}$,
        \item $x_v = [(\gamma^v_0)^T, (\gamma^v_1)^T, \dots, (\gamma^v_k)^T]^T$ are Bezier curve control points associated with vertex $v$,
        \item $\mathcal{X}_e$ are defined such that $\gamma^u_k = \gamma^v_0$, ensuring continuity of adjacent Bezier curves. Similar constraints are applied to control points for the derivatives of the Bezier curve, up to a desired degree of smoothness. 
    \end{itemize}
\end{definition}

The basic idea is to define a vertex for each unique pair of states in the DFA and the TS. Edges connect vertices only if corresponding edges exist in both the DFA and the TS: this ensures that both the physical constraints of the scenario and the logical constraints from the LTL formula are met. The target state $v_T$ is defined such that any path to $v_T$ must first pass through an accepting state ($F$) of the DFA, enforcing satisfaction of the formula. The convex set $\mathcal{X}_v$ for each vertex corresponds to a convex \hl{region}. The continuous variables $x_v \in \mathcal{X}_v$ are the control points of a Bezier curve that is constrained to lie within the corresponding \hl{region}. Edge constraints $\mathcal{X}_e$ ensure that control points of adjacent states line up, forming a continuous and smooth path.  

Defined in this way, any path from $v_0$ to $v_T$ in the GCS $\mathcal{G}$ corresponds to a smooth Bezier spline in configuration space, and is guaranteed to satisfy the given co-safe LTL specification. 

The last step is to define edge lengths $l_e(x_u, x_v)$, which corresponds to setting the path cost $J(p)$ in (\ref{eq:main_problem}). While any convex function could be used, an approximation of the Bezier curve length such as 
\begin{equation}
    l_e(x_u, x_v) = \sum_{i=0}^k\|\gamma^u_i - \gamma^u_{i+1}\|
\end{equation}
is particularly appealing. If $\|\cdot\|$ is chosen to be the L2 norm, then $J(p)$ is a strict overapproximation of the actual path length, with the approximation typically growing tighter the more control points are used. For very large-scale problems, it may be more appealing to choose $\|\cdot\|$ as the L1 norm. While this gives a looser approximation of the path length, the resulting convex programs are Linear Programs (LPs) rather than Second Order Cone Programs (SOCPs) \cite{marcucci2021shortest}, for which numerical solvers are more mature and scalability can be improved. Since the derivative of a Bezier curve is another Bezier curve, a similar procedure can be applied to add cost terms related to the time derivatives of the path.

\subsection{Full LTL}\label{sec:ltl_as_gcs:full_ltl}

To consider full LTL specifications, we need to somehow reason about infinite-length paths with a finite number of decision variables. Again inspired by the model checking literature, we choose to represent such infinite paths with loops \cite{baier2008principles}, \hl{e.g.,}
\begin{equation}
    \hl{\sigma = A_0, A_1, \dots, A_k, (B_0, B_1, \dots, B_l)^\omega.}
\end{equation}
\hl{This representation allows us to consider infinite behavior by searching for two finite paths: a prefix that is executed once and a suffix that repeats forever.}

Much of the machinery developed in Section~\ref{sec:ltl_as_gcs:cosafe} above can be applied directly to the full LTL case. We can similarly define a transition system $TS$, convert the LTL formula $\varphi$ into a DBA (rather than a DFA) $\mathcal{A}_B$, and compute a GCS as the product $\mathcal{G} = TS \otimes \mathcal{A}_B$.

The key difference is the acceptance condition for a DBA. For a DBA, it is not sufficient to simply reach an accepting state: we must both reach an accepting state and find a loop that visits the accepting state infinitely often. \hl{This loop corresponds to the second finite path, with trace $B_0, B_1, \dots, B_l$.}

We address this problem by breaking down the solution into two separate GCS solves. This procedure is outlined in Algorithm~\ref{alg:full_ltl}. The first solve is identical to the procedure for co-safe formulas outlined above: we simply find a path from the initial state to an accepting state. \hl{The trace of this path is the first finite word $A_0,A_1,\dots,A_k$.} The second GCS solve looks for a loop that starts and ends with the same Bezier curve segment as the first solve. \hl{The trace of this loop corresponds to the repeated finite word $B_0, B_1, \dots, B_l$}. Concatenating these two paths provides an infinite-length path with a loop that satisfies the specification.
\begin{algorithm}
    \caption{Full LTL Motion Planning}\label{alg:full_ltl}
    \begin{algorithmic}
        \Require spec $\varphi$, \hl{regions} $\mathcal{X}$, labels $L$, initial state $q_0$

        \State $TS = TransitionSystem(\mathcal{X}, L)$
        \State $\mathcal{A}_B = DBA(\varphi)$
        \State $\mathcal{G} = TS \otimes \mathcal{A}_B$

        \While {$\mathcal{G}$ contains accepting states}
        \State $p_1 = $ shortest path in $\mathcal{G}$ from $v_0$ to $v_T$
        \State $v_F = (s_F, q_F), q_F \in F$ is accepting state from $p_1$
        \State $p_2 = $ shortest path in $\mathcal{G}$ from $v_F$ to $v_F$
        \If {$p_2 \neq \emptyset$}
            \State \Return $p^* = p_1 \cdot p_2$
        \Else
            \State $\mathcal{G}.pop(v_F)$ \Comment{Remove $v_F$ and try again}
        \EndIf
        \EndWhile
        
        \State \Return no solution found
    \end{algorithmic}
\end{algorithm}

It may be possible that the first GCS solve returns a path from which no loops are possible, even though another initial path might admit a valid loop. In this case, we take a CEGIS-inspired approach, discarding the accepting state used in the first GCS solve and attempting to re-solve the problem. As explored in further detail in Section~\ref{sec:theory}, this procedure comes with less satisfying formal guarantees than the procedure for co-safe formulas. 

Nonetheless, this two-step procedure is able to find solutions for full LTL specifications like that shown in Fig.~\ref{fig:kl_loop}. In that example, the specification
\begin{equation}\label{eq:kl_loop}
    \varphi = \always (\eventually a \land \eventually b)  
\end{equation}
 requires a mobile robot to visit two regions infinitely often. The first segment, $p_1$, drives the robot to visit regions $a$ and $b$, reaching an accepting state of the automaton $\mathcal{A}_B$. The second segment, $p_2$, forms a loop and lines up with $p_1$ in region $b$.

\begin{figure}
    \centering
    \begin{subfigure}{0.32\linewidth}
        \centering
        \includegraphics[width=\linewidth]{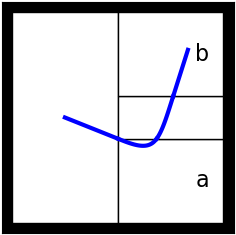}
        \caption{$p_1$}
        \label{fig:kl_loop:p1}
    \end{subfigure}
    \begin{subfigure}{0.32\linewidth}
        \centering
        \includegraphics[width=\linewidth]{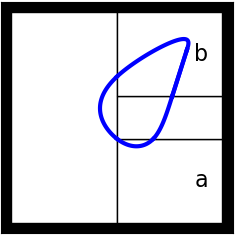}
        \caption{$p_2$}
        \label{fig:kl_loop:p2}
    \end{subfigure}
    \begin{subfigure}{0.32\linewidth}
        \centering
        \includegraphics[width=\linewidth]{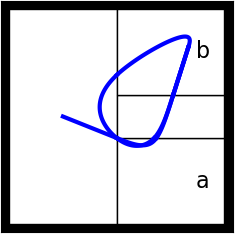}
        \caption{$p*$}
        \label{fig:kl_loop:full}
    \end{subfigure}
    \caption{A solution to the LTL specification (\ref{eq:kl_loop}), which requires the robot to visit both regions $a$ and $b$ infinitely often, and is not part of the co-safe fragment. }
    \label{fig:kl_loop}
\end{figure}

\section{Theoretical Analysis}\label{sec:theory}

In this section, we provide a formal analysis of the soundness, completeness, and computational complexity of our proposed approach. In short, our method is sound and complete for co-safe LTL formulas, sound for general LTL formulas, has double exponential complexity in the formula size, polynomial complexity in the number of control points, and polynomial complexity in the configuration space dimension. 

In all of this analysis, there are four cases to consider, since we can have co-safe or general LTL formulas, and we can solve the GCS problem with either MICP or convex optimization and rounding. 

\subsection{Soundness}

In this section we consider soundness. An LTL motion planner is sound if every motion plan returned by the planner satisfies the given specification.

Soundness is a relatively easy property to ensure, and our proposed approach is sound regardless of whether co-safe or full LTL formulas are used, and regardless of whether the GCS problem is solved with MICP or convex optimization.

\begin{theorem}[Soundness for co-safe LTL formulas]\label{theorem:cosafe_soundness}
    If Algorithm~\ref{alg:cosafe} returns a non-empty path $p$ for co-safe LTL specification $\varphi$, then $\mathrm{trace}(p) \vDash \varphi$.
\end{theorem}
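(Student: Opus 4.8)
The plan is to show that any path returned by Algorithm~\ref{alg:cosafe} is a feasible path in the product GCS $\mathcal{G} = TS \otimes \mathcal{A}$, and that feasibility alone forces the trace to be a word accepted by $\mathcal{A}$, hence a word satisfying $\varphi$. The key observation is that soundness never uses optimality: whether the shortest-path subproblem is solved exactly by MICP or approximately by convex optimization and rounding, the returned $p$ corresponds to an integer-feasible solution, i.e., a genuine vertex sequence $v_0 = (s_0, q_0), (s_1, q_1), \dots, (s_N, q_N)$ followed by the target $v_T$, together with control points satisfying every vertex and edge constraint of Definition~\ref{def:product_gcs}. So I would first fix such a feasible path and argue purely from those constraints, covering both solution methods at once.

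First I would establish that the continuous path $p$ actually visits the regions dictated by the $TS$ component of the vertex sequence. Each vertex $(s_i, q_i)$ carries control points $\gamma^{(i)}_0, \dots, \gamma^{(i)}_k \in \mathcal{P}(s_i)$, since $\mathcal{X}_v = \mathcal{P}(s_i)^{k+1}$. Because the Bernstein weights are non-negative and sum to one, each Bezier segment is a convex combination of its control points, so by convexity of $\mathcal{P}(s_i)$ the entire $i$-th segment lies in $\mathcal{P}(s_i)$. This is the step that rules out clipping: the whole curve, not just sample points, stays inside the region, so segment $i$ carries the well-defined label $\mathcal{L}(s_i) = L(\mathcal{P}(s_i))$ and $\mathrm{trace}(p) = \mathcal{L}(s_0), \mathcal{L}(s_1), \dots, \mathcal{L}(s_N)$. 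The edge continuity constraints $\gamma^u_k = \gamma^v_0$ guarantee the segments concatenate into a single continuous path, but they are not needed for the trace argument itself.

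Next I would read off the DFA run. By the edge definition in Definition~\ref{def:product_gcs}, each edge $(s_i, q_i) \to (s_{i+1}, q_{i+1})$ requires $\delta(q_i, \mathcal{L}(s_i)) = q_{i+1}$, so the sequence $q_0, q_1, \dots, q_N$ is exactly the run of $\mathcal{A}$ on the word $\mathcal{L}(s_0), \dots, \mathcal{L}(s_{N-1})$. Moreover the only edges into $v_T$ emanate from vertices $(s, q)$ with $q \in F$, so reaching $v_T$ forces $q_N \in F$ and the run is accepting. Since $\mathcal{A}$ is the DFA produced from the co-safe formula $\varphi$, its language is exactly the set of good prefixes of $\varphi$, so this prefix of $\mathrm{trace}(p)$ is a good prefix and therefore $\mathrm{trace}(p) \vDash \varphi$, which is the claim.

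I expect the main obstacle to be the bookkeeping that links the continuous trace to the discrete accepted word, specifically the off-by-one between the $N+1$ labels in $\mathrm{trace}(p)$ and the $N$ labels consumed by $\mathcal{A}$ before the final, label-free transition into $v_T$. This is exactly where the co-safe hypothesis does the work: because acceptance of a good prefix is preserved under arbitrary extension, the trailing label $\mathcal{L}(s_N)$ (and any continuation) cannot invalidate satisfaction. I would make this precise by invoking the defining property of the co-safe DFA---that reaching $F$ certifies every extension satisfies $\varphi$---rather than reasoning directly about finite-word semantics. A secondary point worth stating explicitly is that overlapping regions make ``the region visited by $p$'' ambiguous as a pure set-membership statement, but the returned GCS vertex sequence canonically selects one region per segment, so $\mathrm{trace}(p)$ is well defined relative to the solution.
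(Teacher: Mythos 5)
Your proof is correct and follows essentially the same route as the paper's: a feasible path reaching $v_T$ in the product GCS yields an accepting run of the DFA $\mathcal{A}$ on the sequence of region labels, which by the DFA--formula correspondence forces $\mathrm{trace}(p) \vDash \varphi$, independently of whether MICP or convex optimization and rounding produced the path. The paper's own proof is a terse sketch of exactly this argument; your additions (the convex-hull containment of each Bezier segment within its region, the good-prefix handling of the trailing label, and the canonical selection of one region per segment when regions overlap) simply make explicit the details it leaves implicit.
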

\begin{proof}
    By construction of the GCS $\mathcal{G}$ (Def.~\ref{def:product_gcs}), any path in $\mathcal{G}$ that reaches the target vertex $v_T$ corresponds to a path in the DFA $\mathcal{A}$ that reaches an accepting state $q_F \in F$. By the relationship between LTL formula $\varphi$ and the DFA $\mathcal{A}$, this means that the continuous path $p$ travels through a sequence of \hl{regions} such that the corresponding sequence of labels satisfies $\varphi$. Thus the theorem holds, regardless of whether MICP or convex programming is used to find such a path. 
\end{proof}

\begin{theorem}[Soundness for full LTL formulas]\label{theorem:full_soundness}
    If Algorithm~\ref{alg:full_ltl} returns a non-empty path $p$ given LTL specification $\varphi$, then $\mathrm{trace}(p) \vDash \varphi$.
\end{theorem}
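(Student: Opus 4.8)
The plan is to mirror the argument of Theorem~\ref{theorem:cosafe_soundness}, augmenting it with the extra reasoning required by the B\"uchi acceptance condition. The object returned by Algorithm~\ref{alg:full_ltl} is the lasso $p^* = p_1 \cdot p_2$, which I would interpret semantically as the infinite path consisting of the prefix $p_1$ (from $v_0$ to the accepting vertex $v_F$) followed by infinitely many repetitions of the loop $p_2$; its trace is then an infinite word of the form $A_0,\dots,A_k,(B_0,\dots,B_l)^\omega$ introduced earlier. First I would recall that, exactly as in the co-safe case, the product construction of Def.~\ref{def:product_gcs}---now taken with the DBA $\mathcal{A}_B$ in place of the DFA---guarantees that every edge $(s,q)\to(s',q')$ of $\mathcal{G}$ respects both a transition $s\to s'$ of $TS$ and a transition $\delta(q,\mathcal{L}(s))=q'$ of $\mathcal{A}_B$. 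Hence any path in $\mathcal{G}$ projects simultaneously onto a physical path through the labeled regions and onto a run of $\mathcal{A}_B$ whose input word is the corresponding sequence of labels, i.e.\ the trace of the physical path.

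Second, I would establish that the induced run is accepting. Since the only edges into $v_T$ originate from vertices $(s,q)$ with $q\in F$, the prefix $p_1$ necessarily passes through the vertex $v_F=(s_F,q_F)$ with $q_F\in F$; by the projection above, the run of $\mathcal{A}_B$ driven by the prefix trace reaches $q_F$. The loop $p_2$ is, by construction, a path in $\mathcal{G}$ from $v_F$ back to $v_F$, and is returned only when it is non-empty; its projection is therefore a run of $\mathcal{A}_B$ that departs from $q_F$ and returns to $q_F$. Consequently, each repetition of $p_2$ in the infinite unrolling revisits the accepting state $q_F$, so the run of $\mathcal{A}_B$ on $\mathrm{trace}(p^*)$ visits $F$ infinitely often---precisely the B\"uchi acceptance condition. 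Invoking the equivalence between $\mathcal{A}_B$ and $\varphi$, the word $\mathrm{trace}(p^*)$ is accepted by $\mathcal{A}_B$, and hence $\mathrm{trace}(p^*)\vDash\varphi$. As in Theorem~\ref{theorem:cosafe_soundness}, this reasoning is independent of whether each individual GCS solve is performed exactly with MICP or approximately with convex optimization and rounding, since any returned path is integer-feasible in $\mathcal{G}$ by construction.

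The main obstacle I anticipate is making the semantic interpretation of the returned lasso rigorous, rather than the automaton bookkeeping itself. In particular, I must argue carefully that the finite loop $p_2$, when repeated, genuinely yields an infinite run revisiting $q_F$ infinitely often rather than merely once: this hinges on $p_2$ being a true cycle based at the accepting vertex $v_F$, so that both its start and its end project to $q_F$, which the algorithm enforces by requiring $p_2$ to begin and terminate at $v_F$. A secondary point worth stating explicitly is that $v_T$ is a virtual sink with no associated region, so it contributes nothing to $\mathrm{trace}(p^*)$; the physical path is obtained by discarding the final edge into $v_T$ and substituting the repeated loop, which ensures the trace is well-defined as the infinite word $A_0,\dots,A_k,(B_0,\dots,B_l)^\omega$. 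No claim about continuity or optimality of $p^*$ is needed for soundness, so I would defer those matters to the completeness and complexity analyses.
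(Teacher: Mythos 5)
Your proposal is correct and follows essentially the same argument as the paper's proof: the product construction guarantees that $p_1$ reaches an accepting state $q_F \in F$ of the DBA and that $p_2$ loops back to it, so the infinite unrolling visits $F$ infinitely often, satisfying the B\"uchi acceptance condition regardless of whether MICP or convex optimization solves each GCS subproblem. Your version simply spells out the projection-onto-runs argument and the lasso semantics more explicitly than the paper's brief treatment.
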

\begin{proof}
    By construction of the GCS $\mathcal{G}$, any solution to the two-step procedure of Algorithm~\ref{alg:full_ltl} corresponds to a path that reaches an accepting state $q_F \in F$ in the DBA $\mathcal{A}_B$ and loops back to $q_F$. Therefore the path visits $F$ infinitely often, satisfying the specification. Thus the theorem holds, regardless of whether MICP or convex programming is used to find such a path.
\end{proof}

It is worth noting that these soundness properties apply to the whole path $p$, not just predefined sample points. This is a notable improvement over much prior work on temporal logic motion planning in discrete time \cite{belta2019formal,belta2017formal}, where clipping between time samples might falsify the specification (e.g., lead to a collision with obstacles). Our method guarantees that each Bezier curve segment is completely contained in a single \hl{region}, eliminating this soundness gap. 

\subsection{Completeness}

In this section we consider the completeness of Algorithms~\ref{alg:cosafe} and \ref{alg:full_ltl}. An LTL motion planner is complete if it always finds a solution when one is available. \hl{The completeness guarantees in this section are posed over a given set of convex regions: optimal decomposition of configuration space into labeled convex regions remains an important area for future research.}

We first consider the case of a co-safe LTL specification where MICP is used to solve the GCS subproblem. This case offers the strongest completeness guarantees under relatively modest assumptions:

\begin{theorem}[Completeness for co-safe LTL with MICP]\label{theorem:cosafe_micp_completeness}
    Assume that there exists a satisfying Bezier spline solution of the desired degree and smoothness, and that the control points of each segment are each contained within a single \hl{region}. If MICP is used to solve the GCS subproblem, then Algorithm~\ref{alg:cosafe} will return a path that satisfies the given co-safe LTL specification.
\end{theorem}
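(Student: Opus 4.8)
The plan is to reduce completeness to the exactness of the MICP encoding together with the soundness result already established in Theorem~\ref{theorem:cosafe_soundness}. The only substantive work is to show that the hypothesized satisfying Bezier spline induces a \emph{feasible} point of the GCS shortest-path problem (\ref{eq:gcs}) on $\mathcal{G} = TS \otimes \mathcal{A}$. Once feasibility is established, the fact that MICP solves (\ref{eq:gcs}) exactly guarantees that a globally optimal---hence feasible and non-empty---path is returned, and Theorem~\ref{theorem:cosafe_soundness} then guarantees that this returned path satisfies $\varphi$. So the theorem follows from the chain: assumed solution $\Rightarrow$ GCS feasible $\Rightarrow$ MICP returns a non-empty path $\Rightarrow$ specification satisfied.

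First I would extract combinatorial data from the assumed solution. Let the satisfying spline consist of segments $b_0, b_1, \dots, b_m$, and use the single-region hypothesis to pick, for each segment $b_j$, a region $\mathcal{P}(s_j)$ containing all of its control points. Because a Bezier curve lies in the convex hull of its control points and each $\mathcal{P}(s_j)$ is convex, the entire segment $b_j$ lies in $\mathcal{P}(s_j)$, so this region assignment realizes a trace $\mathcal{L}(s_0),\mathcal{L}(s_1),\dots,\mathcal{L}(s_m)$ for the spline. Since $\mathrm{trace}(p) \vDash \varphi$ by assumption, this word is accepted by the DFA $\mathcal{A}$, yielding a run $q_0 \xrightarrow{\mathcal{L}(s_0)} q_1 \xrightarrow{\mathcal{L}(s_1)} \cdots \to q_{m+1}$ with $q_{m+1} \in F$.

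Next I would verify that the interleaved sequence $(s_0,q_0),(s_1,q_1),\dots$ is an admissible path in $\mathcal{G}$ and that the assumed control points satisfy every vertex and edge constraint. Continuity of the spline forces the last control point of $b_j$ to equal the first control point of $b_{j+1}$, and this shared point lies in both $\mathcal{P}(s_j)$ and $\mathcal{P}(s_{j+1})$; hence $\mathcal{P}(s_j)\cap\mathcal{P}(s_{j+1})\neq\emptyset$ and $s_j\to s_{j+1}$ is a TS transition. Paired with the DFA run above, Definition~\ref{def:product_gcs} supplies the edges $(s_j,q_j)\to(s_{j+1},q_{j+1})$, and the accepting transition into $v_T$ since $q_{m+1}\in F$. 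The assigned control points of each segment lie in $\mathcal{X}_v = \mathcal{P}(s_j)^{k+1}$ by the single-region hypothesis, and the continuity/smoothness edge constraints $\mathcal{X}_e$ hold because the spline has the required degree and smoothness by assumption. This exhibits a feasible point of (\ref{eq:gcs}) reaching $v_T$, completing the reduction.

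The main obstacle I anticipate is not any single step but the bookkeeping at the boundaries of this correspondence. Most delicately, I must match the first segment to the designated initial vertex $v_0 = (s_0,q_0)$ when $q_0$ lies in several overlapping regions, so that the assumed solution might begin in a region other than the nominal $s_0$; a related subtlety is that the trace itself may be ambiguous when regions overlap, which I resolve by fixing the region assignment witnessed by the control points. Degenerate cases---consecutive segments assigned to the same region (self-transitions $s\to s$) or empty labels---must also be checked against the definitions. I would dispatch the initial-vertex issue by observing that any region containing $q_0$ is reachable from $v_0$ through intersecting-region transitions, or equivalently by taking $s_0$ to be the region of the assumed solution's first segment. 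With these edge cases handled, feasibility holds, exactness of MICP yields a non-empty returned path, and Theorem~\ref{theorem:cosafe_soundness} closes the argument.
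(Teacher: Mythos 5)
Your proposal is correct and takes essentially the same route as the paper's own proof: show that the assumed satisfying Bezier spline induces a feasible path from $v_0$ to $v_T$ in $\mathcal{G}$, then invoke the exactness of the MICP encoding of (\ref{eq:gcs}) (the paper cites \cite[Theorem~5.6]{marcucci2021shortest}) to conclude a satisfying path is returned. The paper compresses this into two sentences, whereas you fill in the feasibility construction the paper leaves implicit (region assignment via the convex-hull property, TS transitions from shared control points, the induced DFA run, and the vertex/edge constraints), plus the closing appeal to Theorem~\ref{theorem:cosafe_soundness}; this is elaboration, not a different argument.
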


\begin{proof}
    By construction of the GCS $\mathcal{G}$ (Def.~\ref{def:product_gcs}), if a satisfying path exists then there exists a path from $v_0$ to $v_T$ in $\mathcal{G}$. This follows from the relationship between the specification $\varphi$ and the automaton $\mathcal{A}$. By \cite[Theorem~5.6]{marcucci2021shortest}, the MICP used to solve (\ref{eq:gcs}) is guaranteed to find the optimal path from $v_0$ to $v_T$. 
\end{proof}

The primary assumption required for this theorem merits some discussion. It may be the case that a satisfying path exists, but it cannot be represented by a Bezier spline of the desired degree and smoothness. It could also be the case that a Bezier spline solution exists, but requires control points that lie outside a \hl{region}. In either of these cases, Algorithm~\ref{alg:cosafe} may fail to find a solution even though one exists. 

Fortunately, there is a simple practical fix to this problem: increase the number of control points, and/or reduce the desired smoothness. Increasing the number of control points adds additional flexibility, and typically brings control points closer to the path. Reducing the smoothness (e.g. $\mathcal{C}^1$ rather than $\mathcal{C}^2$) eliminates some of the constraints on the control points, and may similarly allow for solutions that were previously unavailable. 

In the case of co-safe LTL specifications where convex optimization and rounding are used to solve the GCS subproblem, we can certify probabilistic completeness:

\begin{theorem}[Completeness for co-safe LTL with Convex Optimization]\label{theorem:cosafe_convex_completeness}
    Assume that there exists a satisfying path made up of Bezier curve segments of desired degree and continuity, and that the control points of each segment are contained within a single \hl{region}. \hl{Furthermore, assume that the convex relaxation of the GCS subproblem assigns nonzero values to binary variables on the optimal path.} If the convex optimization and rounding scheme described in \cite{marcucci2022motion} is used to solve the GCS subproblem, then Algorithm~\ref{alg:cosafe} will return a path $p$ that satisfies the given co-safe LTL specification with probability 1 as the number of rounding iterations approaches infinity.
\end{theorem}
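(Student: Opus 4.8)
The plan is to reduce probabilistic completeness to two ingredients: that $\mathcal{G}$ contains at least one source-to-target path whose edges carry nonzero relaxed flow, and that the randomized rounding of \cite{marcucci2022motion} samples such a path with strictly positive probability on each iteration. Soundness (Theorem~\ref{theorem:cosafe_soundness}) then does the rest: any sampled path reaching $v_T$ automatically satisfies $\varphi$, so I never need to identify a \emph{specific} satisfying path---I only need to reach $v_T$ at all. This observation considerably simplifies the argument relative to a generic probabilistic-completeness proof.

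First I would establish feasibility of the relaxation. Exactly as in the proof of Theorem~\ref{theorem:cosafe_micp_completeness}, the assumed satisfying Bezier spline---of the prescribed degree and smoothness, with each segment's control points contained in a single region---corresponds via Def.~\ref{def:product_gcs} to a feasible integer path from $v_0$ to $v_T$ in $\mathcal{G}$. Hence the MICP is feasible and its convex relaxation is feasible as well. By the standing assumption that the relaxation assigns nonzero values to the binary (edge-flow) variables along the optimal path, there is a $v_0$-to-$v_T$ path $\xi^*$ in $\mathcal{G}$ every edge of which carries relaxed flow bounded below by some $\phi_{\min} > 0$.

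Next I would analyze a single rounding iteration. The rounding scheme of \cite{marcucci2022motion} builds a candidate path by a forward random walk from $v_0$, selecting at each vertex an outgoing edge with probability proportional to its relaxed flow. Because every edge of $\xi^*$ has flow at least $\phi_{\min}$ and $\xi^*$ has finitely many edges (at most $|\mathcal{V}|$, since $\mathcal{G}$ is finite), the probability that one iteration reproduces $\xi^*$---and therefore reaches $v_T$---is bounded below by some $\epsilon > 0$. Each rounding iteration is independent, so with probability at least $\epsilon$ it returns an integer-feasible path to $v_T$. Over $N$ such iterations the probability of never reaching $v_T$ is at most $(1-\epsilon)^N$, which tends to $0$ as $N \to \infty$; hence with probability $1$ some iteration produces a path to $v_T$, and by Theorem~\ref{theorem:cosafe_soundness} that path satisfies $\varphi$.

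The main obstacle is the middle step: making the lower bound $\epsilon$ rigorous requires a careful reading of the rounding procedure in \cite{marcucci2022motion} to confirm that edge-selection probabilities are indeed proportional to---and hence inherit positivity from---the relaxed flows, and that the convex restriction solved along each sampled path is itself feasible. This is exactly where the assumption of nonzero binary variables does its work: without it the walk could be forced onto zero-probability edges and $\epsilon$ could collapse to $0$, which is why the guarantee is only probabilistic and is conditioned on that assumption.
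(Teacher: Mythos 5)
Your proof is correct and follows essentially the same route as the paper's: establish that the assumed satisfying spline yields a feasible path from $v_0$ to $v_T$ in $\mathcal{G}$ (via Def.~\ref{def:product_gcs}), then argue that the randomized rounding of \cite{marcucci2022motion}, with edge probabilities drawn from the relaxed binary variables, finds a satisfying path with probability 1 as the number of iterations grows. If anything, your explicit per-iteration lower bound $\epsilon > 0$ and the $(1-\epsilon)^N \to 0$ argument is a sharper rendering of the same idea, making precise the role of the nonzero-flow assumption that the paper's proof invokes only implicitly through its claim that rounding \emph{eventually explores all possible paths} in $\mathcal{G}$.
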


\begin{proof}
    By construction of the GCS $\mathcal{G}$ (Def.~\ref{def:product_gcs}), if a satisfying path exists then there exists a path from $v_0$ to $v_T$ in $\mathcal{G}$. This follows from the relationship between the specification $\varphi$ and the automaton $\mathcal{A}$. The rounding scheme described in \cite[Section~4.2]{marcucci2022motion} treats the values of binary variables from the convex relaxation as probabilities, and performs rounding as a randomized depth-first search with backtracking. As the number of rounding iterations approaches infinity, this procedure will eventually explore all possible paths in $\mathcal{G}$ with probability 1. If the convex relaxation of (\ref{eq:gcs}) is not feasible, then the MICP is not feasible and no solution exists (by Theorem~\ref{theorem:cosafe_micp_completeness}).
\end{proof}

\begin{remark}
    \hl{The assumption of nonzero values for binary variables on the optimal path is minimally restrictive in practice. In fact, branch-and-bound MICP solvers often prune branches based on integer-valued binary variables in the convex relaxation \cite{conforti2014integer}.}
\end{remark}

In practice (see the examples in Section~\ref{sec:examples}), the gap between convex optimization and MICP is quite small. For all of the examples considered in this paper, convex optimization was able to find a satisfying solution---if not a certified globally optimal solution---with only a few ($<10$) rounding iterations.

For full-LTL specifications, the necessity of finding a loop makes any completeness guarantees significantly more limited. Nonetheless, we can guarantee completeness under a set of more restrictive assumptions:

\begin{theorem}\label{theorem:full_micp_completeness}
    Assume that the conditions for Theorem~\ref{theorem:cosafe_micp_completeness} hold. Furthermore, assume that for at least one accepting state $v = (s, q), q \in F$ in the GCS $\mathcal{G}$, every Bezier curve $b$ contained in $X_v$ admits a loop through $\mathcal{G}$ that starts and ends with $b$. Then Algorithm~\ref{alg:full_ltl} will return a satisfying path. 
\end{theorem}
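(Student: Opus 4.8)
The plan is to reduce the full-LTL case to the co-safe machinery of Theorem~\ref{theorem:cosafe_micp_completeness} for each individual GCS solve, and then wrap this in a finite-termination argument for the CEGIS-style \textbf{while} loop of Algorithm~\ref{alg:full_ltl}. First I would show that every pass through the loop that reaches the prefix solve succeeds: under the hypotheses inherited from Theorem~\ref{theorem:cosafe_micp_completeness}, a satisfying looping path exists, so at least one accepting vertex is reachable from $v_0$ and hence a path from $v_0$ to $v_T$ exists in $\mathcal{G}$; by \cite[Theorem~5.6]{marcucci2021shortest} the MICP solve returns such a path. Thus $p_1$ is non-empty and selects an accepting vertex $v_F=(s_F,q_F)$, $q_F\in F$, on every iteration in which reachable accepting vertices remain.

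Next I would establish termination of the correct kind. Since $\mathcal{G}$ has finitely many accepting vertices and each iteration either returns $p_1\cdot p_2$ or removes exactly one accepting vertex via $\mathcal{G}.pop(v_F)$, the loop runs for finitely many iterations; the task is to rule out termination by exhaustion. The central claim is that the distinguished vertex $v^*$ of the hypothesis---for which \emph{every} Bezier curve $b\in\mathcal{X}_{v^*}$ admits a loop through $\mathcal{G}$ beginning and ending with $b$---is never popped. Indeed, whenever an iteration selects $v_F=v^*$, the prefix $p_1$ fixes one particular curve $b$ at $v^*$; by hypothesis this $b$ admits a loop, so the second MICP solve from $v_F$ to $v_F$ returns a non-empty $p_2$ (again by \cite[Theorem~5.6]{marcucci2021shortest}) and the algorithm returns $p_1\cdot p_2$ rather than popping $v^*$.

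Granting that $v^*$ is never popped and remains reachable, I would conclude as follows: each non-returning iteration deletes a distinct accepting vertex other than $v^*$, so after finitely many iterations either a path has been returned or every reachable accepting vertex except $v^*$ has been removed. In the latter case the next prefix solve must route through $v^*$, the loop solve succeeds, and a non-empty path is returned. Soundness then finishes the argument: by Theorem~\ref{theorem:full_soundness} any non-empty path returned by Algorithm~\ref{alg:full_ltl} satisfies $\varphi$.

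The step I expect to be the main obstacle is guaranteeing that $v^*$ remains reachable from $v_0$ throughout the popping process, since $\mathcal{G}.pop$ may delete accepting vertices that happen to lie on paths leading to $v^*$. Connectivity of $v^*$ to $v_0$ through non-accepting interior vertices is preserved under pops---such vertices are never removed, and $v^*$ itself is never popped---so the clean way to close the proof is to show that an accepting-free route to $v^*$ exists, either by strengthening the reachability hypothesis carried over from Theorem~\ref{theorem:cosafe_micp_completeness} to require it, or by arguing from the product structure that the satisfying looping path can be taken to reach $v^*$ without traversing another accepting vertex. Handling this disconnection risk carefully is what separates a heuristic termination argument from a genuine completeness proof.
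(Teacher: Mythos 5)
Your proposal follows essentially the same route as the paper's proof: invoke Theorem~\ref{theorem:cosafe_micp_completeness} for each GCS solve, use finiteness of the accepting set to bound the number of pops, and observe that the distinguished vertex $v^*$ is never popped because its loop solve always succeeds. The reachability obstacle you flag at the end is real, but the paper's own proof does not resolve it either---it simply asserts that $p_1$ will eventually reach the special accepting state, tacitly assuming that popping other accepting vertices never disconnects that state from $v_0$---so on this point your write-up is more careful than the published argument.
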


\begin{proof}
    By Theorem~\ref{theorem:cosafe_micp_completeness}, Algorithm~\ref{alg:full_ltl} will eventually find a path $p_1$ that reaches the special accepting state $v$. If Algorithm~\ref{alg:full_ltl} finds a path $p_1$ to an accepting state $v'$ that does not admit a loop for every Bezier curve in $X_{v'}$, then $v'$ will be removed from the graph and a new $p_1$ computed. Once a suitable $p_1$ is found, a loop $p_2$ is similarly guaranteed to be found by Theorem~\ref{theorem:cosafe_micp_completeness} and the assumption that a loop exists. The complete path $p* = p_1 \cdot p_2$ corresponds to visiting the accepting set $F$ infinitely often, thus the Theorem holds. 
\end{proof}

Probabilistic completeness guarantees can be obtained for the full LTL case when convex optimization is used instead of MICP, following similar reasoning to Theorem~\ref{theorem:cosafe_convex_completeness}.

The extra assumption that every Bezier curve in at least one accepting state admit a loop is needed to ensure that the CEGIS-inspired procedure in Algorithm~\ref{alg:full_ltl} eventually returns a solution. The assumption that \textit{every} Bezier curve admit a loop is particularly restrictive---it may be the case that a loop ($p_2$) is possible, but not from the starting Bezier curve segment that ends $p_1$. This restriction is necessary because the GCS framework does not allow for constraints between continuous variables $x_v$ that are not connected by an edge. This means that to find a loop, we must constrain the initial and final continuous variables $x_0$ and $x_T$ to some a-priori fixed values. 

\subsection{Complexity}\label{sec:theory:complexity}

In this section, we formally characterize the computational complexity of our proposed approach, focusing on the case where the GCS subproblem is solved with convex optimization and rounding, as described in \cite{marcucci2022motion}. 

At first glance, it may appear that since LTL motion planning is NP-hard, convex optimization is solvable in polynomial time, and we solve LTL motion planning with convex optimization, we have shown that $P=NP$. We emphasize that this is not the case for two reasons: first, convex optimization provides only an approximate solution to the GCS subproblem, which is itself NP-hard \cite{marcucci2021shortest}. Second, and more importantly, the size of the automaton $\mathcal{A}$ is double exponential in the size of the LTL formula $\varphi$, resulting in a GCS $\mathcal{G}$ which is also double exponential in the size of the LTL formula. 

\begin{theorem}\label{theorem:formula_size_complexity}
    \hl{Given the GCS $\mathcal{G}$, the time complexity of finding an approximate solution to the shortest path problem with convex optimization} is $O(2^{2^{|\varphi|}})$, where $|\varphi|$ is the size of $\varphi$.
\end{theorem}
\begin{proof}
    The conversion of $\varphi$ to automaton $\mathcal{A}$ results in an automaton with $O(2^{2^{|\varphi|}})$ states \cite{belta2017formal}, so the GCS $\mathcal{G}$ has $O(2^{2^{|\varphi|}})$ vertices and edges. The convex optimization encoding of \cite{marcucci2022motion} results in an LP or SOCP with variables for each vertex and edge in $\mathcal{G}$. Since LP and SOCP have polynomial time complexity in the number of decision variables \cite{nesterov1994interior}, the overall time complexity is $O(2^{2^{|\varphi|}})$.
\end{proof}

While exponential complexity in the formula size is inevitable due to the NP-hardness of the problem, our proposed approach scales polynomially in a number of variables of practical interest:

\begin{theorem}\label{theorem:control_point_complexity}
    \hl{Given the GCS $\mathcal{G}$, the time complexity of finding an approximate solution to the shortest path problem with convex optimization} is polynomial in $k$, the degree of each Bezier curve segment.
\end{theorem}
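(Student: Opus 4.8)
The plan is to trace how the degree parameter $k$ enters the GCS $\mathcal{G}$ and to argue that it affects only the dimension of the continuous decision variables at each vertex, not the combinatorial structure of the graph. First I would observe from Def.~\ref{def:product_gcs} that the number of vertices and edges of $\mathcal{G}$ is determined entirely by the transition system $TS$ and the automaton $\mathcal{A}$, and is therefore \emph{independent} of $k$. The degree $k$ enters only through $\mathcal{X}_v = \mathcal{P}(s)^{k+1}$, so the continuous variable $x_v = [(\gamma^v_0)^T, \dots, (\gamma^v_k)^T]^T$ associated with each vertex has dimension $n(k+1)$, where $n$ is the configuration-space dimension. Likewise, the edge constraints $\mathcal{X}_e$ enforcing continuity and smoothness of adjacent Bezier segments are linear equalities among these control points, so their number grows linearly in $k$.

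Next I would invoke the convex-optimization encoding of \cite{marcucci2022motion}, exactly as in the proof of Theorem~\ref{theorem:formula_size_complexity}. That encoding produces an LP or SOCP whose decision variables are associated with each vertex and edge of $\mathcal{G}$; with a fixed graph, the total number of continuous decision variables is $O(|\mathcal{V}| \cdot n(k+1))$, i.e.\ linear in $k$. The length function $l_e$ and the vertex/edge convex sets are all described by a number of constraints that likewise scales at most linearly (polynomially) in $k$. Since both LP and SOCP are solvable in time polynomial in the number of decision variables and constraints \cite{nesterov1994interior}, the overall solve time is polynomial in $k$, which is the claim.

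The main obstacle I anticipate is being careful about what is held fixed. The double-exponential blowup of Theorem~\ref{theorem:formula_size_complexity} lives entirely in $|\mathcal{V}|$ and $|\mathcal{E}|$ through the formula size $|\varphi|$; the key point is to verify that increasing $k$ does \emph{not} change the graph at all, so that the two complexity contributions are cleanly separable (the statement is phrased ``given the GCS $\mathcal{G}$,'' which fixes the graph for us). The one place to check honestly is that the per-vertex and per-edge convex descriptions stay polynomial in $k$: the vertex sets are Cartesian powers, which is fine, and the smoothness/continuity edge constraints are linear maps on the control points whose count is $O(k)$ per edge, so no hidden superpolynomial dependence on $k$ creeps in.

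I would close by remarking that the same argument shows polynomial complexity in the configuration-space dimension $n$, since $n$ enters only through the variable dimension $n(k+1)$ in precisely the same way, giving the companion complexity result claimed in the paper's overview.
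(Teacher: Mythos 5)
Your proposal is correct and follows essentially the same route as the paper's proof: the number of decision variables grows linearly in $k$, and the resulting LP or SOCP is solvable in polynomial time by interior point methods \cite{nesterov1994interior}. Your additional checks---that the graph structure of $\mathcal{G}$ is independent of $k$ and that the continuity constraints in $\mathcal{X}_e$ contribute only $O(k)$ linear equalities per edge---are careful elaborations of what the paper leaves implicit, not a different argument.
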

\begin{proof}
    This follows from the fact that the number of decision variables increases linearly with $k$, and the convex optimization problem is an LP or SOCP which can be solved in polynomial time with interior point methods \cite{nesterov1994interior}.  
\end{proof}

This means that the complexity is polynomial in the total number of control points $\gamma_i$ used to represent a curve. This is in contrast to standard MICP-based methods, which scale exponentially in the number of sample points used to represent a path. In practice, we find that increasing the number of control points has little practical impact on solve times, see Fig.~\ref{fig:timestep_scalability}.

\begin{theorem}\label{theorem:configuration_space_complexity}
    \hl{Given the GCS $\mathcal{G}$, the time complexity of finding an approximate solution to the shortest path problem with convex optimization} is polynomial in $n$, the dimension of the configuration space.
\end{theorem}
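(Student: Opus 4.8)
The plan is to follow exactly the template established by the proofs of Theorems~\ref{theorem:formula_size_complexity} and \ref{theorem:control_point_complexity}: isolate how the configuration-space dimension $n$ enters the size of the convex program, and then invoke the polynomial-time solvability of LP/SOCP in the problem dimension. The crucial observation is that $n$ affects \emph{only} the dimension of the continuous decision variables attached to each vertex, and leaves the combinatorial structure of $\mathcal{G}$ entirely untouched.

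First I would note that the number of vertices $|\mathcal{V}|$ and edges $|\mathcal{E}|$ of the product GCS $\mathcal{G}$ is determined solely by the automaton $\mathcal{A}$ and the transition system $TS$ (Def.~\ref{def:product_gcs}), both of which are independent of $n$. Next I would count decision variables: by Def.~\ref{def:product_gcs}, each vertex carries the control points $x_v = [(\gamma^v_0)^T,\dots,(\gamma^v_k)^T]^T$ with each $\gamma^v_i \in \mathbb{R}^n$, giving $(k+1)n$ continuous variables per vertex, so the total count of continuous variables grows linearly in $n$. The edge copies and auxiliary/perspective variables introduced by the convex encoding of \cite{marcucci2022motion} are replicated coordinatewise, so they too scale linearly in $n$, while the scalar flow variables are independent of $n$. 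I would then argue that the constraint count scales polynomially in $n$: the set-membership constraints $x_v \in \mathcal{X}_v = \mathcal{P}(s)^{k+1}$ and the continuity constraints $\gamma^u_k = \gamma^v_0$ encoded in $\mathcal{X}_e$ each contribute a number of rows that is polynomial (indeed linear) in $n$. Finally, since the resulting program is an LP (for the L1 cost) or SOCP (for the L2 cost), and interior-point methods solve these in time polynomial in the number of variables and constraints \cite{nesterov1994interior}, the overall complexity is polynomial in $n$.

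The main obstacle---really the only delicate point---is the hidden assumption about the description complexity of the convex regions. The argument goes through cleanly when each region $\mathcal{P}(s) \subseteq \mathbb{R}^n$ is represented by a number of linear or conic constraints that is itself polynomial in $n$ (e.g.\ polytopes with a bounded number of facets, balls, or ellipsoids), but it would fail if some region required exponentially many constraints to describe. I would therefore make this assumption explicit, observing that it holds for all the standard convex-set representations used in practice and in the examples of Section~\ref{sec:examples}. With that caveat stated, the remaining steps are a routine transcription of the preceding two complexity proofs.
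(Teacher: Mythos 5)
Your proposal is correct and takes essentially the same route as the paper's proof, which states only that the number of decision variables grows linearly in $n$ and that the resulting LP or SOCP is solvable in polynomial time by interior-point methods \cite{nesterov1994interior}. Your explicit caveat that each convex region must admit a description with polynomially many constraints in $n$ is a genuine refinement that the paper leaves implicit, but it does not alter the underlying argument.
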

\begin{proof}
    This follows from the fact that the number of decision variables increases linearly with $n$, and the convex optimization problem is an LP or SOCP which can be solved in polynomial time with interior point methods \cite{nesterov1994interior}.  
\end{proof}

This is a key factor in enabling our approach to scale to high-dimensional configurations spaces, as explored in Section~\ref{sec:examples:high_dof}.

\section{Examples}\label{sec:examples}

In this section, we demonstrate the scalability of our proposed approach with several simulation examples. Code for reproducing these examples is available at \cite{software}. All experiments were performed on a laptop with an Intel i7 CPU and 32GB RAM. The underlying convex optimization solver was MOSEK \cite{mosek}, accessed via Drake \cite{drake} Python bindings.

Solve times for all of the examples are shown in Table~\ref{tab:solve_times}. In addition to the final solve time, we report the time converting the LTL specification $\varphi$ to a DFA $\mathcal{A}$ and time to form the GCS $\mathcal{G} = \mathcal{A} \otimes TS$. Converting LTL to DFA can be particularly expensive for complex specifications due to the double exponential complexity of this operation \cite{belta2017formal}. 

Of these steps, only the final solve is an online operation: all other steps can be performed offline and their results reused for different initial conditions. In all of the examples, we use convex optimization rather than MICP to solve the GCS problem.

\begin{table}[h]
    \centering
    \begin{tabular}{c|c||c|c|c}
         Name & Fig. & LTL to DFA & Form GCS & \textbf{Solve}   \\
         \hline
         shortest path & \ref{fig:shortest_path} & 0.13 & 0.05 & \textbf{0.34} \\
         simple key-door & \ref{fig:key_door} & 0.54 & 0.34 & \textbf{0.29} \\
         complex key-door & \ref{fig:large_door_puzzle} & 32 min. & 547 & \textbf{5.82} \\
         multi-target & \ref{fig:multitarget} & 0.64 & 3.72 & \textbf{7.28} \\
         manipulator & \ref{fig:robot_arm} & 0.10 & 0.08 & \textbf{1.09} \\
         humanoid & \ref{fig:atlas_cover} & 0.11 & 0.18 & \textbf{7.72} \\
    \end{tabular}
    \caption{Timing breakdown for each of the examples in this section. Times are in seconds unless otherwise noted. All operations can be performed offline except ``Solve''. }
    \label{tab:solve_times}
\end{table}

\subsection{Planar Motion Planning}\label{sec:examples:planar}

In this section, we consider motion planning for a simple planar robot. As we consider path planning rather than trajectory planning, the particular dynamics of the robot are not used. We can, however, design paths with a desired degree of smoothness (see Section~\ref{sec:background:bezier_curves}), ensuring feasibility for any differentially flat system \cite{murray1995differential}. 

The first example, shown in Fig.~\ref{fig:shortest_path}, illustrates the advantage of considering both logical constraints and scenario geometry at the same time (something that most automata-based methods struggle with). A mobile robot is tasked with eventually reaching one of two target regions, labeled $a$ and $b$ respectively:
\begin{equation}\label{eq:shortest_path}
    \varphi = \eventually (a \lor b).
\end{equation}
An abstraction-based motion planner such as \cite{da2021automatic} would likely choose the longer path to region $a$, since this requires fewer transitions in $\mathcal{G}$. Our planner, on the other hand, finds the shorter path to target $b$ directly.

The cost in this example is a simple penalty on path length (L2 norm of distance between control points). A Bezier curve of order $k=4$ is used, and $\mathcal{C}^2$ continuity is enforced. Bezier control points are shown as red dots in Fig.~\ref{fig:shortest_path}, while the final path itself is in blue. 

In the second planar motion example, illustrated in Fig.~\ref{fig:key_door}, the robot must pick up two keys (i.e., visit green regions labeled $key1$ and $key2$) in order to pass through corresponding doors (red, labeled $door1$ and $door2$) before eventually reaching a goal (blue, labeled $goal$). More formally, this specification can be written as an LTL formula
\begin{equation}\label{eq:key_door}
    \varphi = (\lnot door1 \until key1) \land (\lnot door2 \until key2) \land \eventually goal.
\end{equation}

A more complicated version of this specification with five keys and five doors is shown in Fig.~\ref{fig:large_door_puzzle}. This benchmark example was first proposed in \cite{vega2018admissible}, where it was solved with an abstraction-based approach, though no solve times were reported. To the best of our knowledge, the fastest reported solve time for this benchmark is 49.5 seconds from \cite{sun2022multi}. \cite{sun2022multi} finds a piecewise linear path (avoiding clipping and pass-through) with MICP. Our proposed approach is roughly an order of magnitude faster (Table~\ref{tab:solve_times}), provided conversion from LTL to DFA and construction of the GCS is performed offline. Furthermore, the path generated with our approach is twice continuously differentiable and certified globally optimal. The extremely long time to convert from LTL to DFA is indicative of the double-exponential complexity of this operation \cite{belta2017formal}. 

We use the simpler key-door example (with two doors) to compare with standard MICP-based temporal logic motion planners, which consider a discretization of time rather than continuous-valued curves for motion planning. In particular, we compare with the MICP approach of \cite{raman2014model} as implemented in the \textit{stlpy} \cite{kurtz2022mixed} python library\footnote{While this library is for STL rather than LTL, the specification in question does not have active timing constraints, rendering the STL and LTL specifications equivalent.}. 

The standard MICP represents continuous paths with a pre-specified number of time steps. If too few time steps are used, clipping and pass-through problems can cause the path to intersect with obstacles between time steps, as illustrated in Fig.~\ref{fig:key_door:standard}. Our proposed approach, on the other hand, considers the whole curve, and therefore does not suffer from clipping or pass-through even if relatively few control points are used, as shown in Fig.~\ref{fig:key_door:ours}.

Typically, clipping and pass-through are addressed by sampling the path more finely, using more time steps. Unfortunately, this leads to an exponential blowup in time-complexity for the standard approach. Our approach, on the other hand, scales only polynomially with the number of control points (Theorem~\ref{theorem:control_point_complexity}). The practical impact of this fact is illustrated in Fig.~\ref{fig:timestep_scalability}. 

In Fig.~\ref{fig:timestep_scalability}, we compare solve times for the standard MICP and our proposed approach, varying the number of time steps for the standard approach and the total number of control points used (the rough equivalent of time steps) for our method. Solve times for the standard MICP increase exponentially, as expected: using more than about 30 timesteps is impractical for this specification. Solve times for our proposed approach, on the other hand, stay roughly constant as the total number of control points increases. We can use as many as 90 control points---10 per Bezier curve segment in this example---with minimal impact on solve time. 

\begin{figure}
    \centering
    \includegraphics[width=\linewidth]{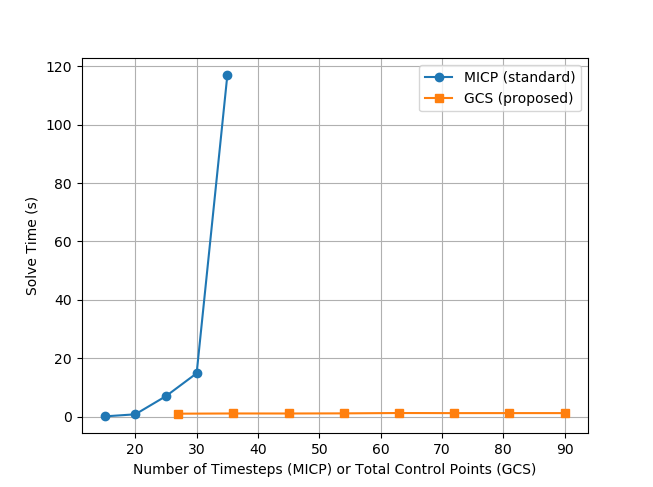}
    \caption{Plot of solve times as the number of sample points used to represent a path (number of timesteps for a standard MICP, number of control points for our approach) increases. The standard approach scales exponentially in the number of time steps, while solve times for our proposed approach are nearly constant as the number of control points grows. }
\label{fig:timestep_scalability}
\end{figure}

\begin{figure}
    \centering
    \includegraphics[width=0.8\linewidth]{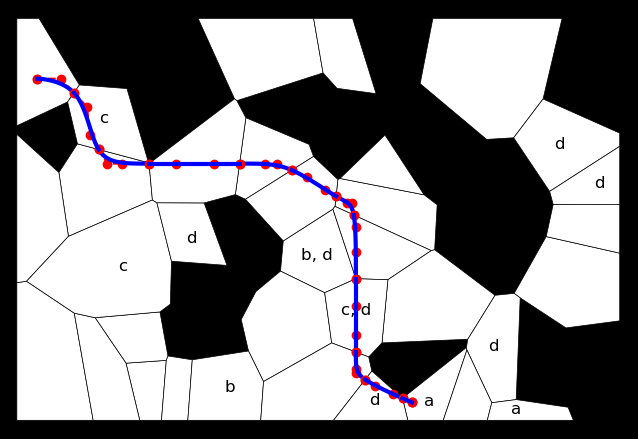}
    \caption{A mobile robot must navigate an environment with multiple labeled targets, eventually reaching $a$, $c$, and $d$, but always avoiding $b$. This is formalized by specification (\ref{eq:multitarget}).}
    \label{fig:multitarget}
\end{figure}

Our final planar motion planning example, illustrated in Fig.~\ref{fig:multitarget}, showcases a more complex scenario as well as a specification that is not part of the co-safe fragment. The robot is tasked with eventually visiting \hl{regions} labeled $a$, $c$, and $d$, while always avoiding $b$:
\begin{equation}\label{eq:multitarget}
    \varphi = \eventually a \land \always \lnot b \land \eventually c \land \eventually d.
\end{equation}

The always operator ($\always$) means that this specification is not syntactically co-safe \cite{belta2017formal}. The final loop which renders infinite-length behaviors is trivial, however: the robot simply remains at the final configuration. 

The GCS for this multi-target example is relatively large, and the shortest-path subproblem is a large and possibly poorly conditioned SOCP if an L2 norm approximation of the path length is used. We found that MOSEK \cite{mosek} was unable to converge to a solution to the SOCP in this case, returning an \texttt{Solver Status: UNKNOWN} error code. To avoid this, we used an L1 norm approximation of the path length (see Section~\ref{sec:ltl_as_gcs:cosafe}) instead. This leads to a LP rather than a SOCP, for which solvers are more mature and reliable. With this L1/LP formulation, MOSEK finds the solution shown in Fig.~\ref{fig:multitarget}.

\subsection{Comparison with Sampling-Based Motion Planning}\label{sec:examples:sampling}

For simple reach-avoid problems (a special case of LTL motion planning), GCS-based motion planning was shown to outperform sampling-based methods in \cite{marcucci2022motion}. In this section, we explore whether similar advantages hold for motion planning under LTL specifications. 

We compare our proposed approach with the state-of-the-art sampling-based planner described in \cite{luo2021abstraction}. This method is especially appealing because it does not require computing the product of a transition system and the specification automaton, but rather incrementally builds trees to explore the state-space of the product without constructing it explicitly. 

In particular, we consider the example in \cite[Section~VII.A.]{luo2021abstraction}. This example (\cite[Fig. 6]{luo2021abstraction}) has with six regions of interest ($r_i$) and two obstacles ($obs$). The task is specified by the LTL formula
\begin{multline}\label{eq:sampling_comparison}
    \varphi = \eventually (r_1 \land \eventually r_3) \land (\lnot r_1 \until r_2) \land \\ \eventually(r_5 \land \eventually(r_6 \land \eventually r_4)) \land (\lnot r_4 \until r_5) \land \always \lnot obs.
\end{multline}
This specification requires a mobile robot to do the following: eventually visit region $r_1$, then $r_3$; not visit $r_1$ until visiting $r_2$; eventually visit $r_5$, $r_6$, and $r_4$ in that order; not visit $r_4$ before $r_5$; and avoid obstacles. 

To compare our approach and \cite{luo2021abstraction}, we randomly generated 100 initial positions and solved the motion planning problem using our approach and sampling-based planning. For the sampling-based planner we used a maximum of $n_{max} = 1000$ iterations and default parameters from the implementation provided with \cite{luo2021abstraction}. For our approach, we solved for a first order Bezier spline with an L2 norm penalty on path length and performed a single rounding iteration. A histogram of solve times is shown in Fig.~\ref{fig:sampling_comparison}. For both approaches, we exclude the time needed to construct an automaton, as this can be performed offline. 

\begin{figure}
    \centering
    \includegraphics[width=\linewidth]{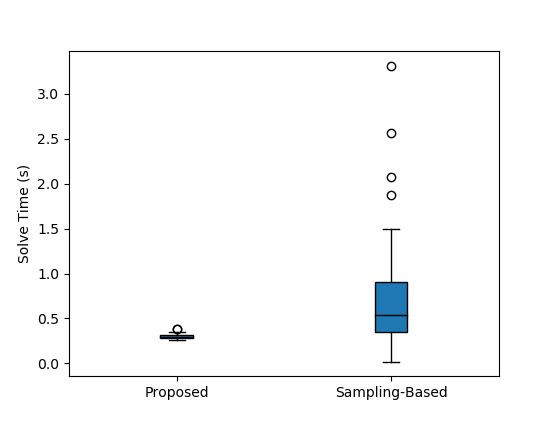}
    \caption{Solve times for specification (\ref{eq:sampling_comparison}) using our proposed approach and a sampling-based approach \cite{luo2021abstraction} over 100 trials with different initial conditions.}
    \label{fig:sampling_comparison}
\end{figure}

Solve times for our proposed approach were both faster on average (mean of 0.30 vs 0.69 seconds) and more consistent (standard deviation of 0.024 vs 0.52 seconds) than the sampling based approach, though there were some initial conditions for which the sampling-based method found a solution faster. 

\subsection{High-DoF Systems}\label{sec:examples:high_dof}

Our proposed approach scales well to high-dimensional configuration spaces. To demonstrate this, we consider the 7-DoF model of a Franka Panda arm shown in Fig.~\ref{fig:robot_arm} and the 30-DoF model of an Atlas humanoid shown in Fig.~\ref{fig:atlas_iris_seeds}. Both models are from Drake \cite{drake}.

\begin{figure*}
    \centering
    \includegraphics[width=0.16\linewidth]{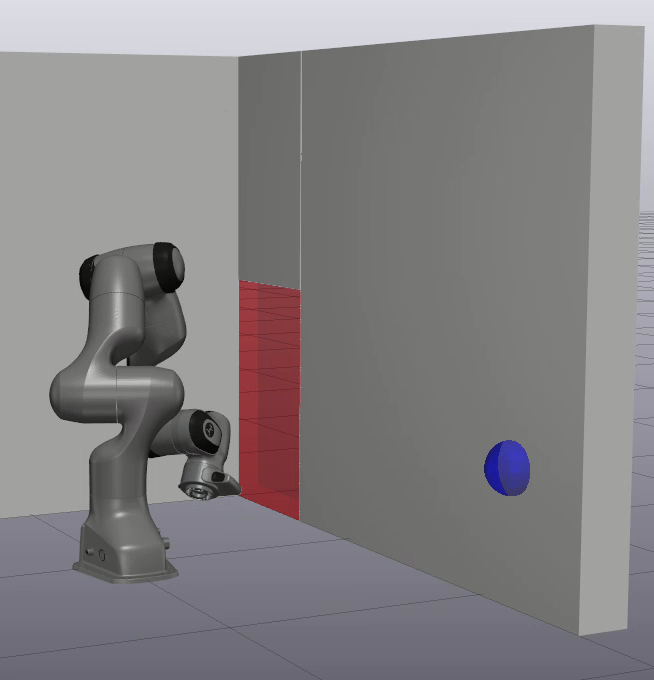}
    \includegraphics[width=0.16\linewidth]{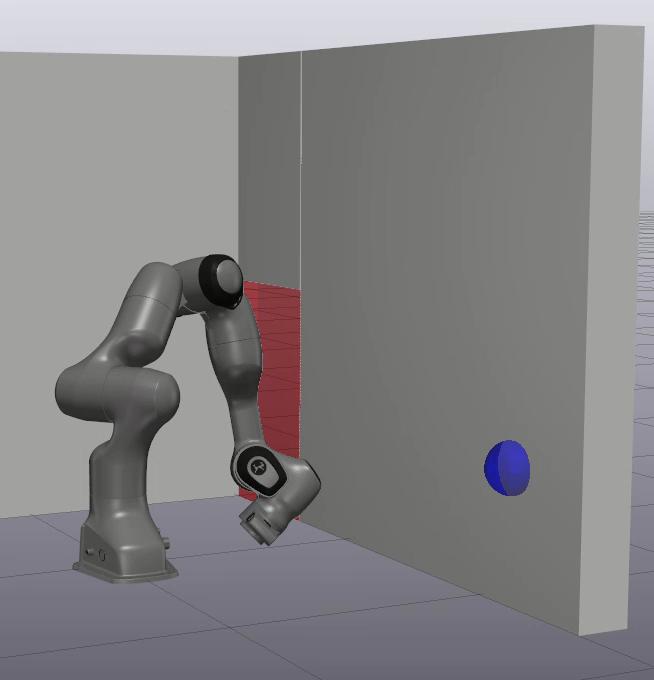}
    \includegraphics[width=0.16\linewidth]{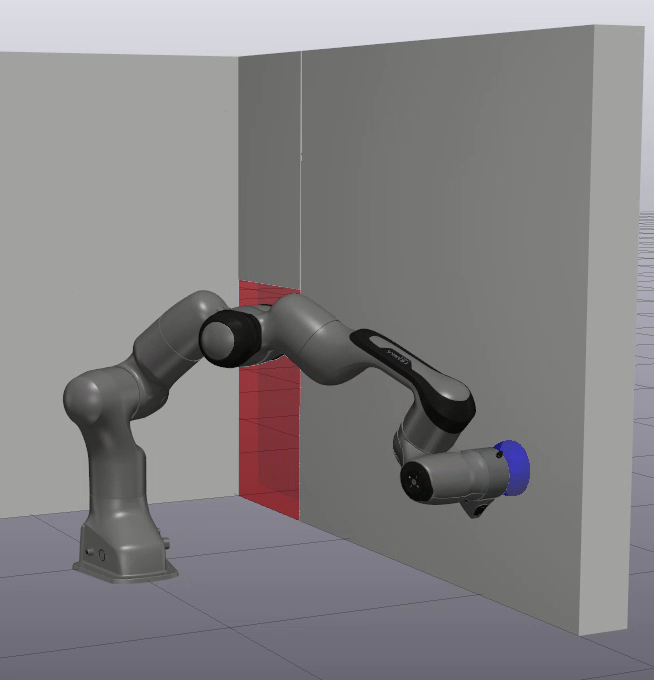}
    \includegraphics[width=0.16\linewidth]{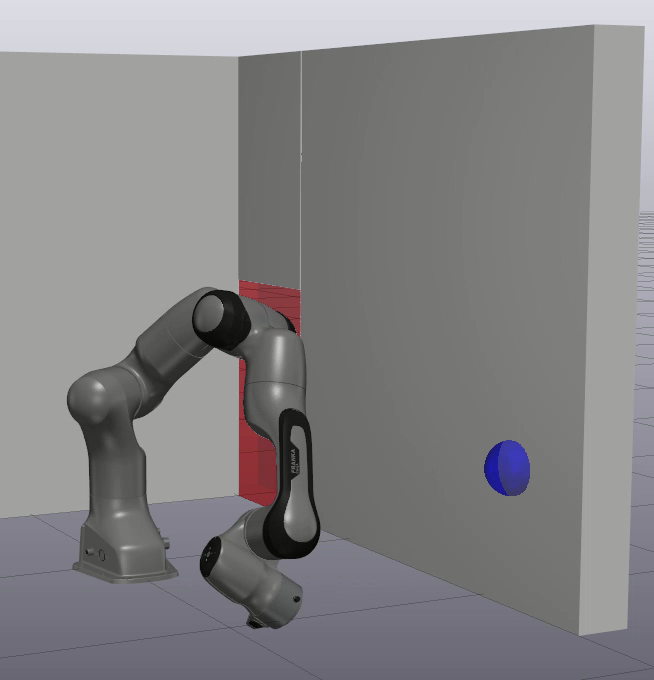}
    \includegraphics[width=0.16\linewidth]{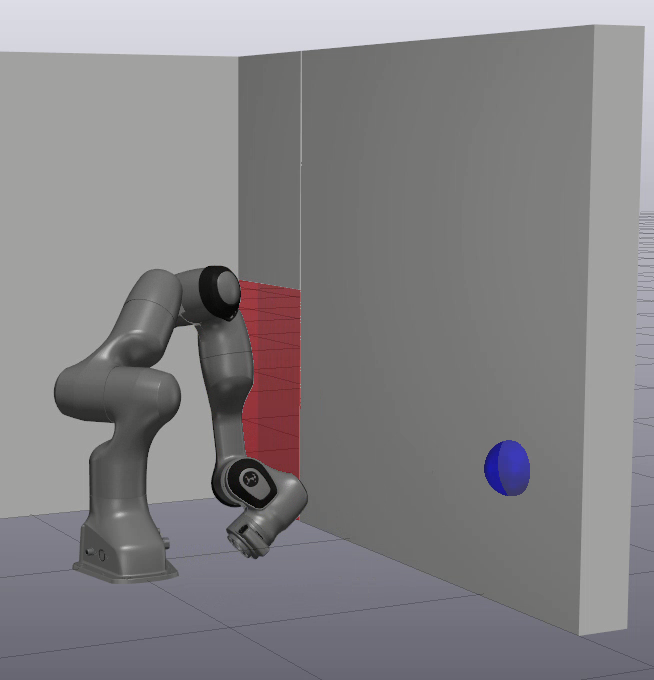}
    \includegraphics[width=0.16\linewidth]{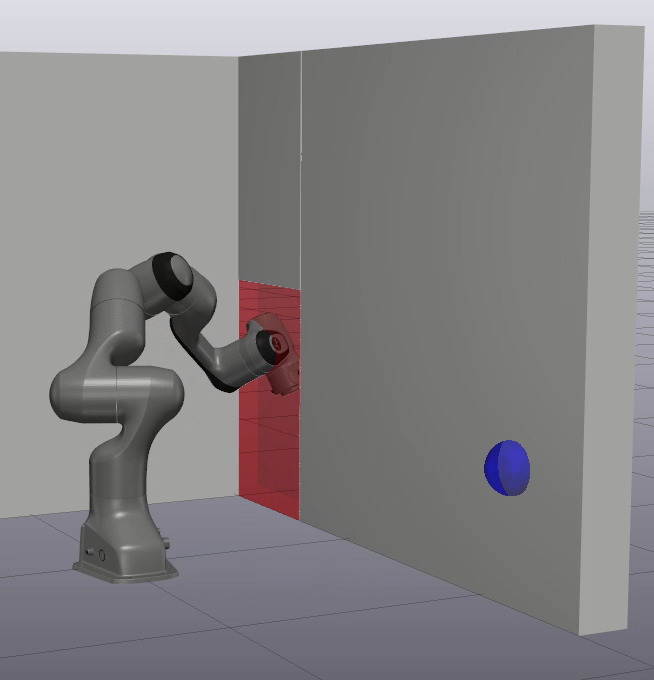}
    \caption{A 7-DoF manipulator arm is tasked with pushing a button (blue) before reaching through a doorway (red), according to formula (\ref{eq:robot_arm}). A labeled convex decomposition of the configuration space was generated offline using C-IRIS \cite{amice2023finding}.}
    \label{fig:robot_arm}
\end{figure*}

\subsubsection{7-DoF Manipulator}

The arm is tasked reaching a target configuration that places its end-effector through a doorway (red box), but it must press a button (blue half circle) before passing through the doorway. This specification can be written as 
\begin{equation}\label{eq:robot_arm}
    \varphi = \eventually target \land \lnot doorway \until button.
\end{equation}
Labeled collision-free convex polytopes in configuration space were generated using C-IRIS \cite{amice2023finding}, as implemented in Drake \cite{drake}. 

Our proposed approach offers considerable performance improvements over state-of-the-art methods for temporal logic motion planning in large configuration spaces. In prior work \cite{kurtz2020trajectory}, we showed how non-convex trajectory optimization could be used to solve temporal logic motion planning for a similar 7-DoF robot arm, and solved a relatively simple specification ($\eventually (a \lor b)$) in 160 seconds. In contrast, the considerably more complex specification (\ref{eq:robot_arm}) is solved by our proposed approach in about a second.

\subsubsection{30-DoF Humanoid}

The Atlas humanoid is tasked with reaching three target boxes, shown in green, blue, and red in Fig.~\ref{fig:atlas_cover}. We use inverse kinematics and C-IRIS \cite{amice2023finding} to define labeled convex \hl{regions} of configuration space corresponding to (a) the left hand touching the green target, (b) the right foot touching the red target, and (c) the right hand touching the blue target. We also define an unlabeled \hl{convex region} without kinematic constraints (d). Seed configurations used to generate these convex sets are shown in Fig.~\ref{fig:atlas_iris_seeds}.

\begin{figure}
    \centering
    \begin{subfigure}{0.48\linewidth}
        \centering
        \includegraphics[width=\linewidth]{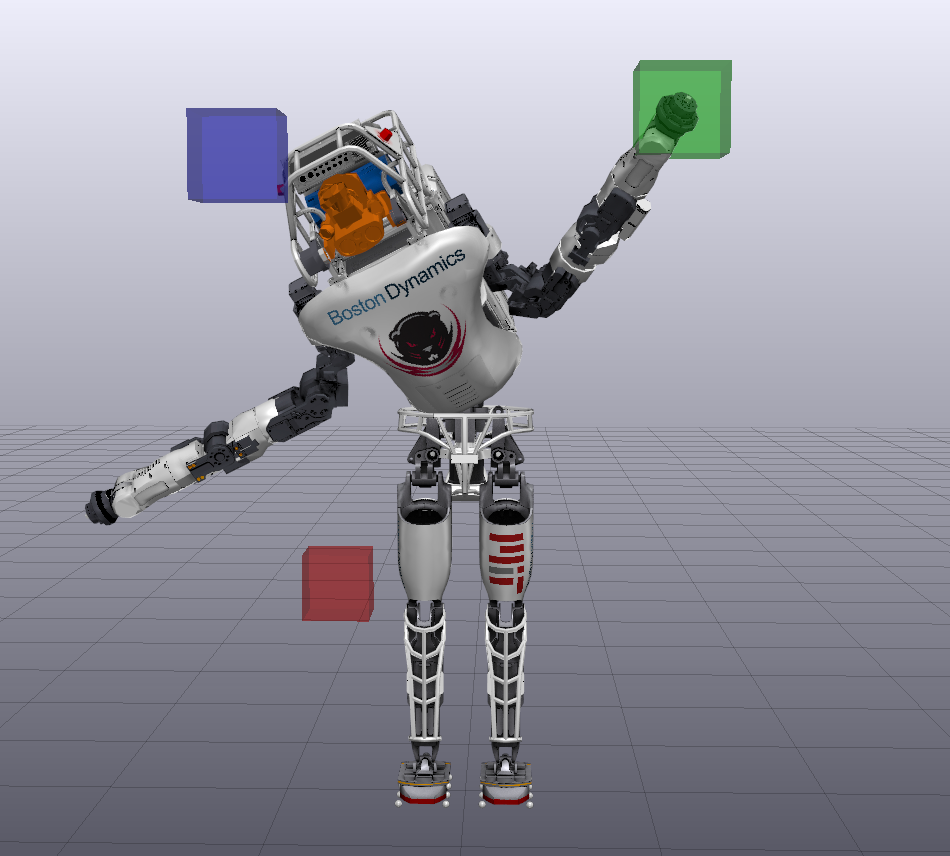}
        \caption{Label: $a$}
    \end{subfigure}
    \begin{subfigure}{0.48\linewidth}
        \centering
        \includegraphics[width=\linewidth]{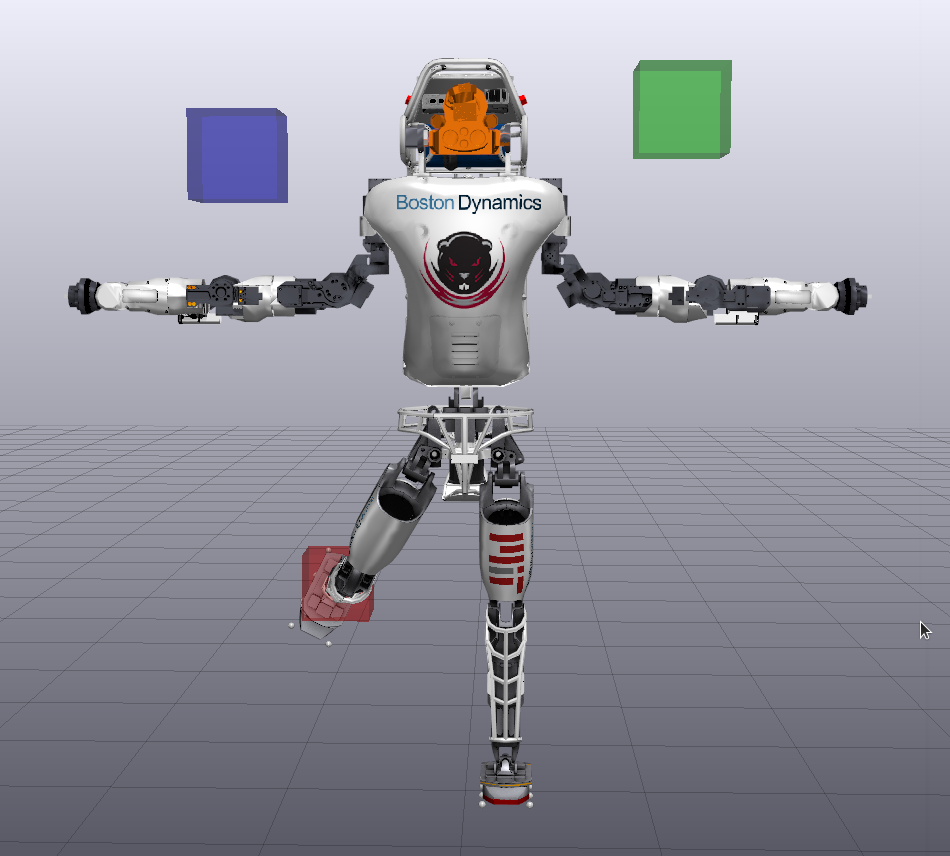}
        \caption{Label: $b$}
    \end{subfigure}
    \begin{subfigure}{0.48\linewidth}
        \centering
        \includegraphics[width=\linewidth]{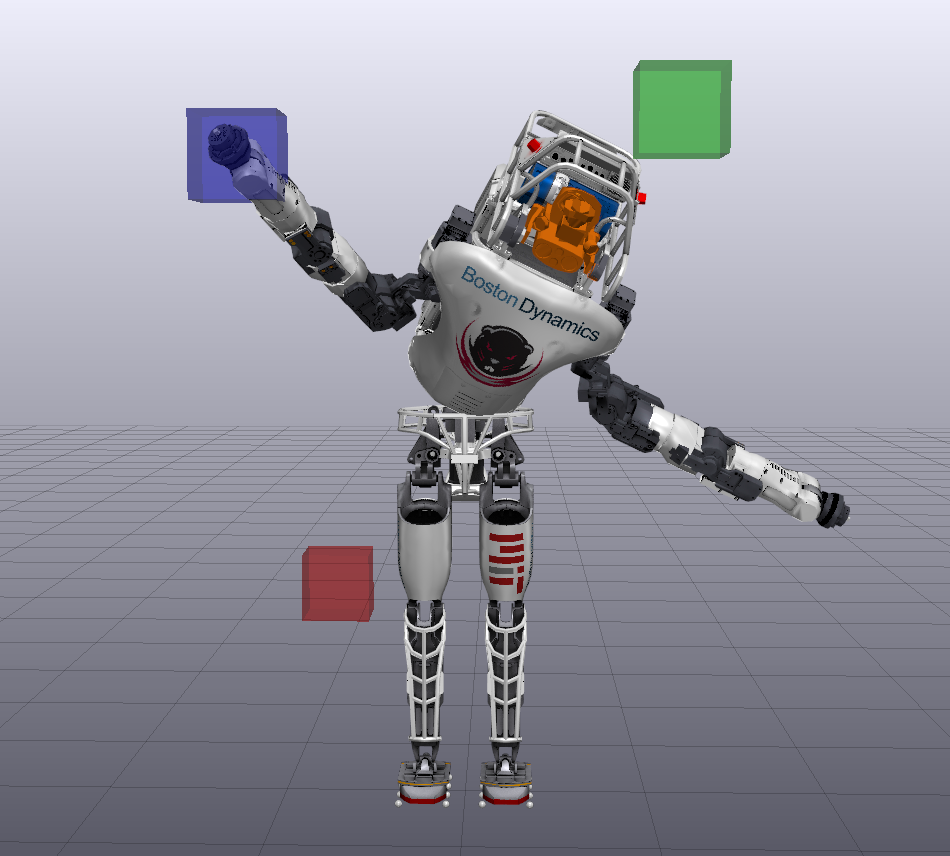}
        \caption{Label: $c$}
    \end{subfigure}
    \begin{subfigure}{0.48\linewidth}
        \centering
        \includegraphics[width=\linewidth]{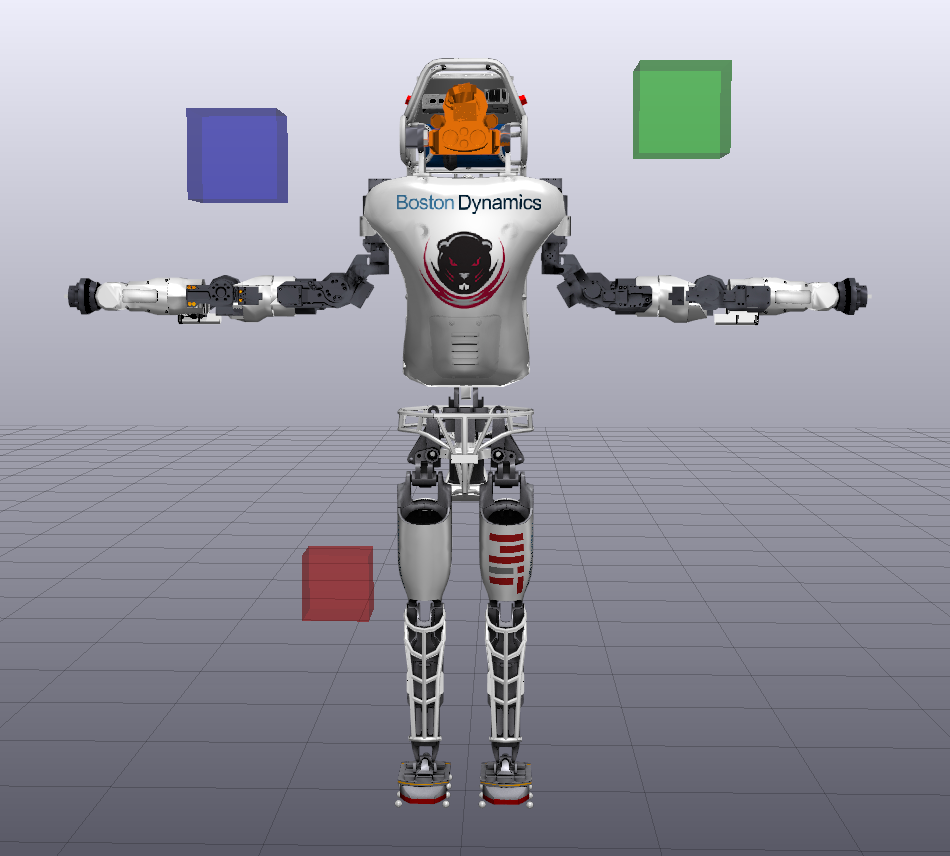}
        \caption{Label: $\emptyset$}
    \end{subfigure}
    \caption{Seed configurations used to generate labeled convex \hl{regions} for the Atlas humanoid. For \hl{regions} labeled $a$, $b$, and $c$, inverse kinematics constraints ensure that the hand or foot stays within the target box.}
    \label{fig:atlas_iris_seeds}
\end{figure}

The specification is then given by
\begin{equation}\label{eq:atlas}
    \varphi = \eventually (a \land \eventually (b \land \eventually c)),
\end{equation}
which states that the robot should eventually reach targets $a$, $b$, and $c$, in that order. 

We fix the pose of the pelvis in the world frame, and do not consider contact interactions with the ground. Balance, contact wrench cone, or other dynamic feasibility constraints are not considered: our goal is merely to find a configuration space path that satisfies the specification. Without a floating base, this is a 30-DoF system. 

We search for a satisfying path using Bezier splines of degree $k=3$ and enforce $\mathcal{C}^2$ smoothness. In addition to an L2 norm penalty on path length, we add a small penalty on configuration space accelerations. As shown in Table~\ref{tab:solve_times}, a locally optimal path is found in 7.72 seconds. 

\section{Limitations}\label{sec:limitations}

In this section, we provide a brief overview of the limitations of our proposed approach along with concrete suggestions for mitigating these limitations in practice.

A major drawback is the double-exponential complexity of converting LTL formulas to automata \cite{belta2017formal}. This limitation is illustrated most clearly in Table~\ref{tab:solve_times}, where conversion to a DFA takes over half an hour for the complex key-door scenario. We performed this conversion using the LTLf2DFA python library \cite{ltlf2dfa}, which uses MONA \cite{monamanual2001} to perform the underlying conversion. While it is possible that other model checking software (see \cite{belta2017formal} for a list) might be more performant, any such conversion software will run up against the fundamental double exponential complexity of this conversion. 

To mitigate this issue in practice, we note that conversion to an automaton and construction of the GCS can be performed offline if the labeled \hl{regions} and specification are known a-priori. Any initial condition can be used with this offline-computed GCS: only the initial vertex $v_0 = (s_0, q_0)$ will change. Given a GCS, convex optimization solve times tend to be relatively fast (on the order of seconds), even for specifications where the conversion to an automaton was extremely slow. 

A related limitation is the need for labeled convex \hl{regions}. For simple planar scenarios like those in Section~\ref{sec:examples:planar} this may be a reasonable expectation, but for more complex systems with high-dimensional configuration spaces, like the robot arm in Section~\ref{sec:examples:high_dof}, it is not always obvious how to obtain such labeled \hl{regions}. Fortunately, there has been considerable recent progress in decomposition of free space into convex \hl{sets}. Algorithms like IRIS \cite{deits2015computing}, and C-IRIS \cite{amice2023finding}, provide iterative procedures for inflating convex regions of free space. Such algorithms provide a convenient means of constructing labeled \hl{convex sets} for high-dimensional systems. \hl{Efficiently generating convex regions from temporal logic labels directly remains an important area for future research.}

Another drawback of this approach is that we consider path planning rather than trajectory planning. This means that satisfying paths do not consider the natural dynamics of the system and may not be dynamically feasible. The ability of Bezier splines to enforce a desired degree of smoothness, however, allows for dynamical feasibility to be enforced for any differentially flat system, a large class of systems that includes quadrotors \cite{mellinger2011minimum} and differential-drive mobile robots \cite{murray1995differential}.

\hl{From a theoretical perspective, the convex optimization approach provides more limited guarantees than MICP, since convex optimization and rounding is not guaranteed to find a globally optimal solution. In practice, however, this does not appear to be a significant concern. As of the time of writing, we have not come across any examples for which MICP finds a satisfying solution but convex optimization does not. }

Finally, we highlight the limited completeness guarantees available for for full LTL (non-co-safe) specifications (Theorem~\ref{theorem:full_micp_completeness}). This limitation ultimately stems from the difficulty of encoding loop constraints in the GCS framework. Nonetheless, many LTL specifications of practical interest for motion planning either fall within the co-safe fragment (\ref{eq:shortest_path}, \ref{eq:key_door}, \ref{eq:robot_arm}) or are not syntactically co-safe but have solutions with trivial loops (\ref{eq:multitarget}). 

\section{Conclusion and Future Work}\label{sec:conclusion}

We presented a convex optimization solution to temporal logic motion planning. The key idea is to convert the LTL motion planning problem into a shortest path problem in a graph of convex sets. This GCS problem can be solved exactly with mixed-integer programming, but also admits a practical approximate solution via convex optimization and rounding \cite{marcucci2021shortest,marcucci2022motion}.

Our proposed approach scales to complex specifications (Fig.~\ref{fig:large_door_puzzle}) and high-dimensional configuration spaces (Fig.~\ref{fig:robot_arm}), and addresses many of the limitations of standard temporal-logic motion planning methods. \hl{We avoid the clipping and pass-through problems associated with discrete-time temporal logic formulations \cite{lin2014mission} by representing paths with Bezier splines}. Unlike standard MICP-based approaches \cite{raman2014model, belta2019formal, sun2022multi}, our proposed approach does not scale exponentially with the number of sample points used to represent a path (Fig.~\ref{fig:timestep_scalability}). Unlike local optimization \cite{pant2017smooth,mehdipour2019arithmetic,gilpin2020smooth}, learning-based \cite{cai2021reinforcement,LeungPavone2022}, or CBF-based \cite{lindemann2018control} approaches, our proposed approach provides \hl{probabilistic} completeness guarantees under modest assumptions.

Potential areas of future work include extensions to multi-agent and distributed systems, extensions to timed temporal logics like STL and MTL, and use in a model predictive control framework. 

\bibliographystyle{unsrt}
\bibliography{references}

\end{document}